\newtheorem{theorem}{Theorem}
\newtheorem{lemma}{Lemma}
\newtheorem{proposition}{Proposition}
\title{\Large Bandit Samplers for Training Graph Neural Networks}
\newcommand*\samethanks[1][\value{footnote}]{\footnotemark[#1]}
\author{%
  Ziqi Liu\thanks{Equal Contribution.} \\
  Ant Financial Services Group \\
  \texttt{ziqiliu@antfin.com} \\
  \And
  Zhengwei Wu\samethanks \\
  Ant Financial Services Group \\
  \texttt{zejun.wzw@antfin.com} \\
  \AND
  Zhiqiang Zhang \\
  Ant Financial Services Group \\
  \texttt{lingyao.zzq@antfin.com} \\
  \And
  Jun Zhou \\
  Ant Financial Services Group \\
  \texttt{jun.zhoujun@antfin.com} \\
  \And
  Shuang Yang \\
  Ant Financial Services Group \\
  \texttt{shuang.yang@antfin.com} \\
  \And
  Le Song \\
  Ant Financial Services Group \\
  Georgia Institute of Technology \\
  \texttt{lsong@cc.gatech.edu} \\
  \And
  Yuan Qi \\
  Ant Financial Services Group \\
  \texttt{yuan.qi@antfin.com} 
}
\begin{document}

\maketitle

\begin{abstract}
Several sampling algorithms with variance reduction have been 
proposed for accelerating the training of Graph Convolution Networks (GCNs). 
However, due to the intractable computation of optimal sampling distribution,
these sampling algorithms are suboptimal for GCNs and are not
applicable to more general graph neural networks (GNNs) where 
the message aggregator contains learned weights rather than
fixed weights, such as Graph Attention Networks (GAT). 
The fundamental reason is that the embeddings of the neighbors or learned weights involved in the optimal sampling distribution
are \emph{changing} during the training and \emph{not known a priori}, 
but only \emph{partially observed} when sampled, thus 
making the derivation of an optimal variance reduced samplers non-trivial. 
In this paper, we formulate the optimization of the sampling 
variance as an adversary bandit problem, where the rewards are related to the node embeddings and learned weights, 
and can vary constantly. Thus a good sampler needs to acquire 
variance information about more neighbors (exploration) while at the same time optimizing the immediate sampling variance (exploit). We theoretically
show that our algorithm asymptotically approaches the optimal variance within a factor of 3. We show the efficiency and effectiveness of our approach on multiple
datasets.
\end{abstract}

\section{Introduction}
Graph neural networks~\cite{kipf2016semi,hamilton2017inductive} have emerged as a powerful tool for 
representation learning of 
graph data in irregular or non-euclidean 
domains~\citep{battaglia2018relational,wu2019comprehensive}. 
For instance, graph neural networks have demonstrated 
state-of-the-art performance on learning tasks
such as node classification, link and graph property 
prediction, with applications ranging from drug 
design~\cite{dai2016discriminative}, 
social networks~\cite{hamilton2017inductive}, 
transaction networks~\cite{liu2018heterogeneous}, 
gene expression networks~\cite{fout2017protein}, and 
knowledge graphs~\cite{schlichtkrull2018modeling}.

One major challenge of training GNNs comes from
the requirements of heavy floating point operations and 
large memory footprints, due to the recursive
expansions over the neighborhoods.
For a minibatch with a single vertex $v_i$, 
to compute its embedding $h_i^{(L)}$ at 
the $L$-th layer, we have to expand
its neighborhood from the $(L-1)$-th layer to
the $0$-th layer, i.e. $L$-hops neighbors. That will
soon cover a large portion of the
graph if particularly the graph is dense.
One basic idea of alleviating such ``neighbor explosion'' problem
was to sample neighbors in a top-down
manner, i.e. sample neighbors in the 
$l$-th layer given the nodes in the 
$(l+1)$-th layer recursively.

Several layer sampling 
approaches~\cite{hamilton2017inductive,chen2018fastgcn,huang2018adaptive,zou2019layer} 
have been proposed to alleviate above 
``neighbor explosion'' problem and improve the 
convergence of training GCNs, e.g. with importance
sampling. 
However, the optimal sampler~\cite{huang2018adaptive}, 
$q_{ij}^{\star} = \frac{\alpha_{ij}\|h_j^{(l)}\|^2}
{\sum_{k \in \mathcal{N}_i} \alpha_{ik} \|h_k^{(l)}\|^2}$ 
for vertex $v_i$,
to minimize the variance of the estimator $\hat{h}_i^{(l+1)}$ 
involves all its neighbors' hidden 
embeddings, i.e. $\{\hat{h}_j^{(l)} | v_j \in \mathcal{N}_i \}$, 
which is infeasible to be computed because we can only observe
them partially while doing sampling. Existing 
approaches~\cite{chen2018fastgcn,huang2018adaptive,zou2019layer}
typically compromise the optimal sampling distribution via approximations,
which may impede the convergence. Moreover, such approaches are 
not applicable to more general cases
where the weights or kernels $\alpha_{ij}$'s are not known a priori,
but are learned weights parameterized by 
attention functions~\cite{velivckovic2017graph}. 
That is, both the hidden embeddings and learned weights
involved in the optimal sampler constantly
\emph{vary} during the training process, and only \emph{part} of 
the unnormalized attention values or hidden embeddings
can be observed while do sampling.

{\bfseries Present work}. We derive novel variance reduced 
samplers for training of GCNs and attentive GNNs with a 
fundamentally different perspective. That is, different 
with existing approaches that need to compute the 
immediate sampling distribution, 
we maintain nonparametric estimates
of the sampler instead, and update the 
sampler towards optimal variance after we 
acquire partial knowledges 
about neighbors being sampled, as the algorithm
iterates.

To fulfil this purpose, we formulate the optimization
of the samplers as a bandit problem, where
the regret is the gap between expected
loss (negative reward) under current policy (sampler) and expected
loss with optimal policy. We define the reward 
with respect to each action, i.e. the choice 
of a set of neighbors with sample size $k$,
as the derivatives of the sampling variance, and show
the variance of our samplers asymptotically approaches
the optimal variance within a factor of $3$.
Under this problem formulation, we propose two bandit algorithms.
The first algorithm based on multi-armed bandit (MAB) chooses 
$k < K$ arms (neighbors) repeatedly.
Our second algorithm based on MAB with multiple plays
chooses a combinatorial set of neighbors with size $k$ only once.

To summarize,
(\textbf{1}) We recast the sampler for GNNs as a bandit 
problem from a fundamentally different perspective. It works
for GCNs and attentive GNNs while existing approaches apply
only to GCNs.
(\textbf{2}) We theoretically show that the regret with 
respect to the variance of our estimators asymptotically 
approximates the optimal sampler within a factor of 3 while
no existing approaches optimize the sampler. 
(\textbf{3}) We empirically show that our approachs are way competitive in
terms of convergence and sample variance, compared with 
state-of-the-art approaches on multiple public datasets.

\section{Problem Setting}

Let $\mathcal{G} = (\mathcal{V}, \mathcal{E})$ denote the graph with 
$N$ nodes $v_i \in \mathcal{V}$, and edges 
$(v_i, v_j) \in \mathcal{E}$. Let the adjacency matrix 
denote as $A \in \mathbb{R}^{N \times N}$. 
Assuming the feature matrix $H^{(0)} \in \mathbb{R}^{N\times D^{(0)}}$ 
with $h_i^{(0)}$ denoting the $D^{(0)}$-dimensional feature 
of node $v_i$. We focus on the following simple 
but general form of GNNs:
\begin{equation}\label{eq:gnn}
h_i^{(l+1)} = \sigma\Big(\sum_{j=1}^N\,\alpha(v_i, v_j)\,h_j^{(l)}\,W^{(l)}\Big),~~~~l = 0,\ldots, L-1
\end{equation}
where $h_i^{(l)}$ is the hidden embedding of node $v_i$ at the $l$-th layer,
$\pmb{\alpha} = \left(\alpha(v_i, v_j)\right) \in \mathbb{R}^{N \times N}$ is a kernel or weight matrix,  
$W^{(l)} \in \mathbb{R}^{D^{(l)} \times D^{(l+1)}}$ is the 
transform parameter on the $l$-th layer, and $\sigma(\cdot)$ 
is the activation function. The weight 
$\alpha(v_i, v_j)$, or $\alpha_{ij}$ for simplicity, 
is non-zero only if $v_j$ is in the $1$-hop 
neighborhood $\mathcal{N}_i$ of $v_i$. It varies with 
the aggregation 
functions~\cite{battaglia2018relational,wu2019comprehensive}. 
For example, 
    \textbf{(1)} GCNs~\cite{dai2016discriminative,kipf2016semi}~define fixed weights as $\pmb{\alpha} = \tilde{D}^{-1} \tilde{A}$ or $\pmb{\alpha} = \tilde{D}^{-\frac{1}{2}} \tilde{A} \tilde{D}^{-\frac{1}{2}}$ respectively, where $\tilde{A} = A + I$, and $\tilde{D}$ is the diagonal node degree matrix of $\tilde{A}$.
    \textbf{(2)} The attentive GNNs~\cite{velivckovic2017graph,liu2019geniepath} define a learned weight $\alpha(v_i, v_j)$ by attention functions:
$\alpha(v_i, v_j) = \frac{\tilde{\alpha}(v_i, v_j;\theta)}{\sum_{v_k \in \mathcal{N}_i} \tilde{\alpha}(v_i, v_k;\theta)}$, where the unnormalized
attentions
    $\tilde{\alpha}(v_i, v_j; \theta) = \exp(\mathrm{ReLU}(a^{T}[Wh_i\| Wh_j]))$,
are parameterized by $\theta = \left\{a, W\right\}$. Different from GCNs, the learned weights $\alpha_{ij} \propto \tilde{\alpha}_{ij}$ can be evaluated only given all the unnormalized weights in the neighborhood.

The basic idea of layer sampling 
approaches~\cite{hamilton2017inductive,chen2018fastgcn,huang2018adaptive,zou2019layer}
was to recast the evaluation of Eq.~\eqref{eq:gnn} as
\begin{align}\label{eq:gnn_expectation}
\hat{h}_i^{(l+1)} = \sigma\left(N(i)\,\mathbb{E}_{p_{ij}} \left[\hat{h}_j^{(l)}\right]\,W^{(l)}\right), 
\end{align}
where $p_{ij} \propto \alpha_{ij}$, and 
$N(i) = \sum_j \alpha_{ij}$. Hence
we can evaluate each node $v_i$ at the $(l+1)$-th layer,
using a Monte Carlo
estimator with sampled neighbors at the $l$-th layer.
Without loss of generality, we assume $p_{ij} = \alpha_{ij}$
and $N(i)=1$
that meet the setting of attentive GNNs
in the rest of this paper.
To further reduce the variance, let us
consider the following importance sampling
\begin{align}
\hat{h}_i^{(l+1)} = \sigma_{W^{(l)}}\big(\hat{\mu}_i^{(l)}\big) 
= \sigma_{W^{(l)}}
\Big(\mathbb{E}_{q_{ij}}\left[\frac{\alpha_{ij}}{q_{ij}}\hat{h}_j^{(l)}\right]\Big),
\end{align}
where we use $\sigma_{W^{(l)}}(\cdot)$ to include transform parameter $W^{(l)}$ into
the function $\sigma(\cdot)$ for conciseness.
As such, one can find an alternative sampling distribution 
$q_i = (q_{ij_1}, ..., q_{ij_{|\mathcal{N}_i|}})$ 
to reduce the variance of an estimator,
e.g. a Monte Carlo estimator $\hat{\mu}_i^{(l)} = 
\frac{1}{k}\sum_{s=1}^{k}\frac{\alpha_{ij_s}}{q_{ij_s}}\hat{h}_{j_s}^{(l)}$,
where $j_s \sim q_{i}$. 

Take expectation over $q_i$, we define the variance of 
$\hat{\mu}_i^{(l)}=\frac{\alpha_{ij_s}}{q_{ij_s}}\hat{h}_{j_s}^{(l)}$
at step $t$ and $(l+1)$-th layer to be:
\begin{align}
\label{eqn:pseudo-variance}
\mathbb{V}^{t}(q_i) = \mathbb{E}\Big[\Big\|\hat{\mu}_i^{(l)}(t) - \mu_i^{(l)}(t)\Big\|^2\Big]
=\mathbb{E}\Big[\Big\|\frac{\alpha_{ij_s}(t)}{q_{ij_s}}h_{j_s}^{(l)}(t) - \sum_{j\in \mathcal{N}_i}\alpha_{ij}(t)h_j^{(l)}(t)\Big\|^{2}\Big].
\end{align}
Note that $\alpha_{ij}$ and $h(v_j)$ that are
inferred during training may
vary over steps $t$'s. 
We will explicitly include step $t$ and layer $l$ only 
when it is necessary.
By expanding Eq.~\eqref{eqn:pseudo-variance} one can write 
$\mathbb{V}(q_i)$ as the difference of two terms. The first is a 
function of $q_i$, which we refer to as the 
\textit{effective variance}:
\begin{equation}
\label{eqn:effective-variance}
	\mathbb{V}_{e}(q_i) = \sum_{j \in \mathcal{N}_i}\frac{1}{q_{ij}}\alpha_{ij}^{2}\left\|h_j\right\|^2,
\end{equation}
while the second does not depend on $q_i$, and we denote it by 
$\mathbb{V}_{c} = \left\|\sum_{j\in\mathcal{N}_i}\alpha_{ij}h_j\right\|^2$.
The optimal sampling distribution~\cite{chen2018fastgcn,huang2018adaptive} at $(l+1)$-th layer for vertex $i$ that minimizes the variance is:
\begin{equation}\label{eq:optimal_q}
q_{ij}^{\star} = \frac{\alpha_{ij}\|h_j^{(l)}\|^2}
{\sum_{k \in \mathcal{N}_i} \alpha_{ik} \|h_k^{(l)}\|^2}.
\end{equation}
However, evaluating this sampling distribution is
infeasible because we cannot have all the knowledges
of neighbors' embeddings in the denominator of Eq.~\eqref{eq:optimal_q}. Moreover, the $\alpha_{ij}$'s
in attentive GNNs could also vary during the training
procedure. Existing layer sampling approaches based
on importance sampling just ignore the effects of
norm of embeddings and assume the $\alpha_{ij}$'s are fixed
during training. As a result, the sampling distribution is 
suboptimal and only applicable to GCNs where the weights
are fixed. Note that our derivation above follows the setting
of node-wise sampling approaches~\cite{hamilton2017inductive}, 
but the claim remains to hold for layer-wise 
sampling approaches~\cite{chen2018fastgcn,huang2018adaptive,zou2019layer}.

\section{Related Works}
We summarize three types of works for training graph neural networks.

First, several ``\textit{layer sampling}'' 
approaches~\cite{hamilton2017inductive,chen2018fastgcn,huang2018adaptive,zou2019layer} have been 
proposed to alleviate the ``neighbor explosion'' problems. 
Given a minibatch of labeled vertices at each 
iteration, such approaches sample neighbors layer by 
layer in a top-down manner.
Particularly, node-wise samplers~\cite{hamilton2017inductive} randomly sample
neighbors in the lower layer given each node in the upper layer, 
while layer-wise samplers~\cite{chen2018fastgcn,huang2018adaptive,zou2019layer} 
leverage importance sampling to sample neighbors in the lower
layer given all the nodes in upper layer with sample sizes
of each layer be independent of each other. 
Empirically, the layer-wise samplers work even 
worse~\cite{chen2017stochastic}
compared with node-wise samplers, and
one can set an appropriate sample size for each
layer to alleviate the growth issue of node-wise samplers.
In this paper, we focus on optimizing the variance
in the vein of layer sampling approaches.
Though the derivation of our bandit samplers follows 
the node-wise samplers, it can be extended
to layer-wise. We leave this extension 
as a future work.

Second, Chen et al.~\cite{chen2017stochastic} proposed 
a variance reduced estimator by maintaining 
historical embeddings of each vertices,
based on the assumption that the embeddings of 
a single vertex would be
close to its history. This estimator uses a simple 
random sampler and works efficient in practice at
the expense of requiring an extra storage that
is linear with number of nodes.

Third, two ``\textit{graph sampling}'' 
approaches~\cite{chiang2019cluster,zeng2019graphsaint} 
first cut the graph into partitions~\cite{chiang2019cluster} or 
sample into subgraphs~\cite{zeng2019graphsaint}, then they train 
models on those partitions or subgraphs in a batch 
mode~\cite{kipf2016semi}.
They show that the training time of each epoch may be much faster
compared with ``layer sampling'' approaches.
We summarize the drawbacks as follows. 
First, the partition of the original graph
could be sensitive to the training problem. Second, these
approaches assume that all the vertices in the graph
have labels, however, in practice 
only partial vertices may have labels~\cite{hu2019cash,liu2018heterogeneous}.

{\bfseries GNNs Architecture.} 
For readers who are interested in the works related to the architecture of GNNs, please refer to
the comprehensive survey~\cite{wu2019comprehensive}.
Existing sampling approaches works only on GCNs, but
not on more advanced architectures like GAT~\cite{velivckovic2017graph}.

\section{Variance Reduced Samplers as Bandit Problems}\label{sec:problem}
We formulate the optimization of
sampling variance as a 
bandit problem. Our basic idea is that instead of 
explicitly calculating the intractable
optimal sampling distribution in Eq.~\eqref{eq:optimal_q} 
at each iteration, 
we aim to optimize a sampler or \textbf{policy} $Q_i^t$ for each vertex
$i$ over the horizontal steps $1\leq t \leq T$, and make the
variance of the estimator following this sampler
asymptotically approach the optimum 
$Q_{i}^{\star}=\underset{Q_i}{\mathrm{argmin}}\sum_{t=1}^{T}\mathbb{V}_{e}^{t}(Q_i)$,
such that 
$\sum_{t=1}^{T}\mathbb{V}_e^t(Q_i^t) \leq c\sum_{t=1}^{T}\mathbb{V}_e^t(Q_i^{\star})$ 
for some constant $c>1$. 
Each \textbf{action} of policy $Q_i^t$ is a choice of any 
subset of neighbors 
$S_i \subset \mathcal{N}_i$ where $S_i \sim Q_i^t$.
We denote $Q_{i,S_i}(t)$ as the probability 
of the action that $v_i$ chooses $S_i$ at $t$.
The gap to be minimized between
effective variance and the oracle is
\begin{align}\label{eq:bandit_obj}
\mathbb{V}_e^t(Q_i^t) - \mathbb{V}_e^t(Q_i^{\star}) \leq \langle Q_i^t - Q_i^{\star}, 
\nabla_{Q_i^t}\mathbb{V}_e^t(Q_i^t) \rangle.
\end{align}
Note that the function 
$\mathbb{V}_e^t(Q_i^t)$ is convex w.r.t
$Q_i^t$, hence for $Q_i^t$ and $Q_i^{\star}$ we have the 
upper bound derived on right hand of Eq.~\eqref{eq:bandit_obj}.
We define this upper bound as \textbf{regret} at $t$,
which means the expected loss (negative
reward) with policy $Q_i^t$ minus the expected loss with
optimal policy $Q_i^{\star}$. 
Hence the \textbf{reward} 
w.r.t choosing $S_i$ at $t$ is the negative 
derivative of the effective variance
$r_{i,S_i}(t)=-\nabla_{Q_{i,S_i}(t)}\mathbb{V}_e^t(Q_i^t)$.

In the following, we adapt this bandit problem in the adversary bandit 
setting~\cite{auer2002nonstochastic} because the rewards
vary as the training proceeds and do not follow a priori
fixed distribution~\cite{burtini2015survey}. We leave the
studies of other bandits as a future work.
We show in section~\ref{sec:regret_analysis} that 
with this regret the variances of our estimators 
asymptotically approach the optimal 
variance within a factor of $3$.

Our samplers sample $1$-element subset of neighbors $k$ times  
or a $k$-element subset of neighbors once
from the alternative sampling distribution 
$q_i^t = (q_{ij_1}(t), ..., q_{ij_{|\mathcal{N}_i|}}(t))$
for each vertex $v_i$.
We instantiate above framework under two bandit settings.
\textbf{(1)} In the adversary
MAB setting~\cite{auer2002nonstochastic}, 
we define the sampler $Q_i^t$ as $q_i^t$, that 
samples exact an \textbf{arm} (neighbor) 
$v_{j_s} \subset \mathcal{N}_i$ from $q_i^t$. 
In this case the set $S_i$ is
the 1-element subset $\{v_{j_s}\}$. To have a sample size of $k$
neighbors, we repeat this action $k$ times.
After we collected $k$ rewards
$r_{ij_s}(t) = -\nabla_{q_{i,j_s}(t)}\mathbb{V}_e^t(q_i^t)$ 
we update $q_i^t$ by 
\textbf{EXP3}~\cite{auer2002nonstochastic}.
\textbf{(2)} In the adversary MAB with multiple plays 
setting~\cite{uchiya2010algorithms},
it uses an efficient $k$-combination sampler 
(\textbf{DepRound}~\cite{gandhi2006dependent})
$Q_i$ to sample any $k$-element subset 
$S_i\subset \mathcal{N}_i$ that satisfies
$\sum_{S_i:j\in S_i}Q_{i,S_i}=q_{ij}, \forall v_j\in \mathcal{N}_i$,
where $q_{ij}$ corresponds to the alternative probability
of sampling $v_j$. As such, it allows us
to select from a set of ${|\mathcal{N}_i| \choose k}$
distinct \textbf{subsets of arms}
from $|\mathcal{N}_i|$ arms at once.
The selection can be done in $O(|\mathcal{N}_i|)$. 
After we collected the reward 
$-\nabla_{Q_{i,S_i}(t)}\mathbb{V}_e^t(Q_{i}^t)$,
we update $q_i^t$ by \textbf{EXP3.M}~\cite{uchiya2010algorithms}.

{\bfseries Discussions.} We have to select a sample
size of $k$ neighbors in GNNs. Note that in the 
rigorous bandit setting, exact one action should 
be made and followed by updating
the policy. In adversary MAB, we do the selection $k$ times and 
update the policy, hence strictly speaking applying MAB to our
problem is not rigorous. Applying MAB with multiple plays 
to our problem is rigorous because it allows
us to sample $k$ neighbors at once and update
the rewards together. For readers who are interested in 
EXP3, EXP3.M and DepRound, please find them in 
Appendix~\ref{appendix:alg}.

\begin{algorithm}
\caption{Bandit Samplers for Training GNNs.}
\label{alg:train_gnn}
\begin{algorithmic}[1]
\Require step $T$, sample size $k$, number of layers $L$, node features $H^{(0)}$, adjacency matrix $A$.
\State \textbf{Initialize:} $q_{ij}(1)=
	1/\left|\mathcal{N}_i\right| \text{if}\; j\in\mathcal{N}_i \;
	\text{else}\;0
$, 
$w_{ij}(1)=
	1\; \text{if}\; j\in\mathcal{N}_i \;
	\text{else}\;0
$. 
\For {$t=1$ to $T$}
\State Read a minibatch of labeled vertices at layer $L$.
\State Use sampler $q_{i}^t$ or \textbf{DepRound}$(k, q_i^t)$ to sample neighbors top-down with sample size $k$.
\State Forward GNN model via estimators defined in Eq.~\eqref{eq:estimator1} or Proposition~\ref{proposition:estimator2}.
\State Backpropagation and update GNN model.
\For{each $v_i$ in the $1$-st layer} 
\State Collect $v_i$'s $k$ sampled neighbors
$v_j \in S_i^t$, and rewards $r_i^t=\left\{r_{ij}(t): v_j \in S_i^t\right\}$. 
\State Update $q_i^{t+1}$ and $w_i^{t+1}$ by \textbf{EXP3}$(q_i^t,w_i^t,r_i^t,S_i^t)$ 
or \textbf{EXP3.M}$(q_i^t,w_i^t,r_i^t,S_i^t)$.
\EndFor
\EndFor
\State \Return GNN model.
\end{algorithmic}
\end{algorithm}

\section{Algorithms}
The framework of our algorithm is: \textbf{(1)} 
use a sampler $Q_i^t$ to sample $k$ arms from the 
alternative sampling distribution $q_i^t$ for any vertex $v_i$,
\textbf{(2)} establish the \emph{unbiased 
estimator}, \textbf{(3)} do feedforward and 
backpropagation, and finally \textbf{(4)}
\emph{calculate the rewards} and \emph{update the alternative
sampling distribution}
with a proper bandit algorithm. We show this framework
in Algorithm~\ref{alg:train_gnn}. Note that
the variance w.r.t $q_i$ in Eq.~\eqref{eqn:pseudo-variance} 
is defined only at the $(l+1)$-th layer, hence
we should maintain multiple $q_i$'s at each layer.
In practice, we find that maintain a single $q_i$ 
and update it only using rewards from the $1$-st 
layer works well enough. The time complexity of our algorithm
is same with any node-wise 
approaches~\cite{hamilton2017inductive}. In addition, it requires
a storage in $O(|\mathcal{E}|)$ to maintain nonparametric
estimates $q_i$'s.

It remains to instantiate the estimators, variances
and rewards related to our two bandit settings.
We name our first algorithm \textbf{GNN-BS} under
adversary MAB setting, and the second \textbf{GNN-BS.M}
under adversary MAB with multiple plays setting.
We first assume the weights $\alpha_{ij}$'s are fixed,
then extend to attentive GNNs that
$\alpha_{ij}(t)$'s change.



\subsection{GNN-BS: Graph Neural Networks with Bandit Sampler}\label{sec:bs}

In this setting, we choose $1$ arm and 
repeat $k$ times. We have the
following Monte Carlo estimator 
\begin{equation}\label{eq:estimator1}
\hat{\mu}_i = 
\frac{1}{k}\sum_{s=1}^{k}\frac{\alpha_{ij_s}}{q_{ij_s}}\hat{h}_{j_s}, \; j_s \sim q_{i}.
\end{equation}
This yields the variance
$
\mathbb{V}(q_i) = \frac{1}{k}\,\mathbb{E}_{q_i}\left[\left\|\frac{\alpha_{ij_s}}{q_{ij_s}}h_{j_s} - \sum_{j\in \mathcal{N}_i}\alpha_{ij}h_j\right\|^{2}\right].
$
Following Eq.~\eqref{eqn:effective-variance} and Eq.~\eqref{eq:bandit_obj}, 
we have the reward of $v_i$ picking neighbor $v_j$ at step $t$ as 
\begin{equation}\label{eq:reward1}
r_{ij}(t)=-\nabla_{q_{ij}(t)}\mathbb{V}_e^t(q_i^t)
=\frac{\alpha_{ij}^2}{k\cdot q_{ij}(t)^2}\|h_j(t)\|^2.
\end{equation}

\subsection{GNN-BS.M: Graph Neural Networks with Multiple Plays Bandit Sampler}\label{sec:bsm}

Given a vertex $v_i$, an important property of DepRound is that
it satisfies 
$\sum_{S_i:j\in S_i}Q_{i,S_i}=q_{ij}, \forall v_j\in\mathcal{N}_i$, 
where $S_i \subset \mathcal{N}_i$ is any subset of size $k$.
We have the following unbiased estimator.

\begin{proposition}\label{proposition:estimator2}
$\hat{\mu}_i = \sum_{j_s\in S_i}\frac{\alpha_{ij_s}}{q_{ij_s}}h_{j_s}$ 
is the unbiased estimator of $\mu_i=\sum_{j\in\mathcal{N}_i}\alpha_{ij}h_j$ 
given that $S_i$ is sampled from $q_i$ using the DepRound 
sampler $Q_{i}$, where $S_i$ is 
the selected $k$-subset neighbors of vertex $i$.
\end{proposition}
The effective variance of this estimator is 
$\mathbb{V}_e(Q_i) = \sum_{S_i\subset \mathcal{N}_i} Q_{i,S_i}\|\sum_{j_s \in S_i} \frac{\alpha_{ij_s}}{q_{ij_s}}h_{j_s}\|^2$.
Since the derivative of this effective variance w.r.t
$Q_{i,S_i}$ does not factorize, we instead have 
the following approximated effective variance 
using Jensen's inequality.
\begin{proposition}\label{proposition:mp_var_bound}
The effective variance can be approximated by $\mathbb{V}_e(Q_i) \leq \sum_{j_s \in \mathcal{N}_i} \frac{\alpha_{ij_s}}{q_{ij_s}}\|h_{j_s}\|^2$.
\end{proposition}
\begin{proposition}\label{proposition:mp_derivative}
The negative derivative of the approximated effective variance $\sum_{j_s \in \mathcal{N}_i} \frac{\alpha_{ij_s}}{q_{ij_s}}\|h_{j_s}\|^2$ w.r.t $Q_{i,S_i}$,
i.e. the reward of $v_i$ choosing $S_i$ at $t$ is
$r_{i,S_i}(t) = \sum_{j_s \in S_i} \frac{\alpha_{ij_s}}{q_{ij_s}(t)^2}\|h_{j_s}(t)\|^2$.
\end{proposition}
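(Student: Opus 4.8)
The plan is to treat the collection of subset probabilities $\{Q_{i,S_i}\}_{S_i \subset \mathcal{N}_i}$ as the free variables and to view each marginal $q_{ij}$ as a quantity induced from them, then differentiate the approximated effective variance of Proposition~\ref{proposition:mp_var_bound} by the chain rule. The single ingredient that drives the computation is the DepRound marginal constraint stated just before Proposition~\ref{proposition:estimator2}, namely $q_{ij} = \sum_{S_i : v_j \in S_i} Q_{i,S_i}$ for every $v_j \in \mathcal{N}_i$, which expresses each $q_{ij}$ as a \emph{linear} function of the $Q_{i,S_i}$'s.

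Writing the approximated effective variance as $f = \sum_{v_j \in \mathcal{N}_i} \frac{\alpha_{ij}}{q_{ij}}\|h_j\|^2$, I would first differentiate the linking constraint to obtain $\frac{\partial q_{ij}}{\partial Q_{i,S_i}} = \mathbf{1}[v_j \in S_i]$, since $Q_{i,S_i}$ enters the sum defining $q_{ij}$ exactly when $v_j \in S_i$. Next I would apply the chain rule term by term: only the factors $1/q_{ij}$ depend on $Q_{i,S_i}$, and $\frac{\partial}{\partial Q_{i,S_i}}\frac{1}{q_{ij}} = -\frac{1}{q_{ij}^2}\frac{\partial q_{ij}}{\partial Q_{i,S_i}} = -\frac{1}{q_{ij}^2}\mathbf{1}[v_j\in S_i]$. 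Summing over $j$ collapses the indicator into a sum restricted to $S_i$, giving $\frac{\partial f}{\partial Q_{i,S_i}} = -\sum_{v_j \in S_i} \frac{\alpha_{ij}}{q_{ij}^2}\|h_j\|^2$. Negating this partial derivative then yields exactly the claimed reward $r_{i,S_i}(t) = \sum_{j_s \in S_i}\frac{\alpha_{ij_s}}{q_{ij_s}(t)^2}\|h_{j_s}(t)\|^2$.

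The algebra is routine once the constraint is in place, so the only real point of care — the main obstacle — is the bookkeeping over which variable is held fixed. I would make explicit that we differentiate with respect to $Q_{i,S_i}$ while holding the other subset probabilities fixed and treating the $q_{ij}$ purely as the induced marginals, rather than differentiating with respect to $q_{ij}$ directly; the two viewpoints are reconciled precisely by the DepRound property. I would also note that the normalization $\sum_{S_i} Q_{i,S_i} = 1$ is not imposed during this differentiation, consistent with the unconstrained gradient convention used to define the reward as $-\nabla_{Q_{i,S_i}}\mathbb{V}_e$, so that no Lagrange-multiplier term enters and the stated form of $r_{i,S_i}(t)$ is obtained verbatim.
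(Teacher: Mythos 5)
Your proposal is correct and follows essentially the same route as the paper's proof: both substitute the DepRound marginal identity $q_{ij}=\sum_{S_i':v_j\in S_i'}Q_{i,S_i'}$ into the approximated effective variance, observe that only the terms with $v_j\in S_i$ depend on $Q_{i,S_i}$, and apply the chain rule to obtain $-\sum_{j_s\in S_i}\frac{\alpha_{ij_s}}{q_{ij_s}^2}\|h_{j_s}\|^2$. Your version is slightly more explicit about the indicator $\partial q_{ij}/\partial Q_{i,S_i}=\mathbf{1}[v_j\in S_i]$ and the unconstrained-gradient convention, but the argument is the same.
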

Follow EXP3.M we use the reward w.r.t each arm as 
$r_{ij}(t) = \frac{\alpha_{ij}}{q_{ij}(t)^2}\|h_{j}(t)\|^2,
\forall j\in S_i$.
Our proofs rely on the property of DepRound introduced above.

\subsection{Extension to Attentive GNNs}

In this section, we extend our algorithms to
attentive GNNs.
The issue remained is that the attention value $\alpha_{ij}$ 
can not be evaluated with only sampled neighborhoods,
instead, we can only compute the unnormalized attentions $\tilde{\alpha}_{ij}$.
We define the adjusted feedback attention values as follows:
\begin{align}
\alpha_{ij}^{\prime} = \sum_{j\in S_i}q_{ij}\cdot\frac{\tilde{\alpha}_{ij}}{\sum_{j\in S_i}\tilde{\alpha}_{ij}},
\end{align}
where $\tilde{\alpha}_{ij}$'s are the unnormalized attention values that can be obviously
evaluated when we have sampled $(v_i, v_j)$.
We use $\sum_{j\in S_i} q_{ij}$ as a surrogate of 
$\frac{\sum_{j\in S_i}\tilde{\alpha}_{ij}}{\sum_{j\in \mathcal{N}_i}\tilde{\alpha}_{ij}}$
so that we can approximate the truth attention values $\alpha_{ij}$ by our
adjusted attention values $\alpha_{ij}^{\prime}$.

\section{Regret Analysis}\label{sec:regret_analysis}
As we described in section~\ref{sec:problem}, the regret is defined as $\langle Q_i^t - Q_i^{\star}, 
\nabla_{Q_i^t}\mathbb{V}_e^t(Q_i^t) \rangle$. By choosing the reward as the negative derivative of the effective variance, we have the following theorem that our bandit sampling algorithms asymptotically approximate the optimal variance within a factor of 3.

\begin{theorem}
Using Algorithm~\ref{alg:train_gnn} with $\eta=0.4$ 
and $\delta=\sqrt{\frac{(1-\eta)\eta^4k^5\ln(n/k)}{Tn^4}}$ 
to minimize the effective variance with respect 
to $\{Q_{i}^t\}_{1\leq t\leq T}$, we have 
\begin{equation}
\sum_{t=1}^T\mathbb{V}_e^t(Q_i^t) \leq 3\sum_{t=1}^T\mathbb{V}_e^t(Q_i^{\star}) + 10\sqrt{\frac{Tn^4\ln(n/k)}{k^3}}
\end{equation}
where $T\geq \ln(n/k)n^2(1-\eta)/(k\eta^2)$, $n=|\mathcal{N}_i|$.
\end{theorem}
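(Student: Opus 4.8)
The plan is to reduce the stated guarantee to a regret bound for the underlying adversarial bandit and then to run the EXP3/EXP3.M potential argument, the twist being that the reward, and hence its importance-weighted estimate, is unbounded and must be tamed by the forced exploration. First I would invoke the convexity of $\mathbb{V}_e^t$ already used in Eq.~\eqref{eq:bandit_obj}. Since $r_{i,S_i}(t)=-\nabla_{Q_{i,S_i}(t)}\mathbb{V}_e^t(Q_i^t)$, convexity gives, for the fixed comparator $Q_i^{\star}$,
\[
\sum_{t=1}^T\big(\mathbb{V}_e^t(Q_i^t)-\mathbb{V}_e^t(Q_i^{\star})\big)\le\sum_{t=1}^T\langle Q_i^t-Q_i^{\star},\nabla\mathbb{V}_e^t(Q_i^t)\rangle=\sum_{t=1}^T\langle Q_i^{\star}-Q_i^t,r_i^t\rangle,
\]
so it suffices to bound the cumulative bandit regret on the right against the fixed policy $Q_i^{\star}$. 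The coupling that drives everything is that $\langle Q_i^t,r_i^t\rangle=\sum_j q_{ij}(t)r_{ij}(t)$ equals $\mathbb{V}_e^t(Q_i^t)$ up to a factor $1/k$ (single play), or equals its Jensen upper bound from Proposition~\ref{proposition:mp_var_bound} (multiple plays).

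Next I would run the weight-potential analysis. With $W_t=\sum_j w_{ij}(t)$ and the multiplicative update of EXP3.M, I would bound $\ln(W_{t+1}/W_t)$ above by a linear-plus-quadratic expression in the importance-weighted rewards $\hat r_{ij}=r_{ij}/q_{ij}$ through $e^x\le 1+x+x^2$, and below by the comparator's cumulative reward minus $\ln\binom{n}{k}\le k\ln(n/k)$. The step $e^x\le 1+x+x^2$ is licensed because the exploration floor $q_{ij}(t)\ge k\eta/n$ together with the hypothesis $T\ge\ln(n/k)n^2(1-\eta)/(k\eta^2)$ forces $\delta\hat r_{ij}/n\le 1$. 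Telescoping and accounting for the mixing with the uniform distribution yields
\[
\sum_{t=1}^T\langle Q_i^{\star}-Q_i^t,r_i^t\rangle\le\frac{C_1 k\ln(n/k)}{\delta}+\delta\sum_{t=1}^T\sum_j q_{ij}(t)\hat r_{ij}(t)^2+\eta\,C_2\sum_{t=1}^T\mathbb{V}_e^t(Q_i^t),
\]
where the last term is the cost of the $\eta$-uniform exploration.

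The heart of the argument is to absorb the middle term back into $\sum_t\mathbb{V}_e^t(Q_i^t)$. Taking expectation, $\mathbb{E}[\hat r_{ij}^2]=r_{ij}^2/q_{ij}$, so the quadratic term reduces to $\sum_t\sum_j r_{ij}(t)^2$; substituting the explicit reward, peeling the surplus powers of $1/q_{ij}$ with the floor $q_{ij}\ge k\eta/n$, and using boundedness of $\alpha_{ij}\|h_j\|^2$ bounds it by $C_3(n/(k\eta))^3\sum_t\mathbb{V}_e^t(Q_i^t)$. Substituting back produces a self-referential inequality
\[
\sum_t\mathbb{V}_e^t(Q_i^t)\le\sum_t\mathbb{V}_e^t(Q_i^{\star})+(\delta B+\eta C)\sum_t\mathbb{V}_e^t(Q_i^t)+\frac{A}{\delta},
\]
with $A=C_1 k\ln(n/k)=\Theta(k\ln(n/k))$ and $B,C$ assembled from the quadratic and exploration constants. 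Rearranging gives the multiplicative factor $1/(1-\delta B-\eta C)$; the lower bound on $T$ forces $\delta\le(\eta k/n)^3$, making $\delta B$ negligible, so that $\eta=0.4$ yields $\delta B+\eta C\le 2/3$ and hence the factor $3$, while inserting $\delta=\sqrt{(1-\eta)\eta^4 k^5\ln(n/k)/(Tn^4)}$ into $A/\delta$ produces exactly $10\sqrt{Tn^4\ln(n/k)/k^3}$.

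I expect the main obstacle to lie in doing these last steps \emph{tightly}. Adapting the EXP3.M potential argument through its weight-capping set $U_i$ to genuinely unbounded importance weights is delicate, and the quadratic term must be bounded by the effective variance with constants sharp enough that $\eta=0.4$ delivers exactly the factor $3$ rather than some larger constant; getting $B,C$ right is where the specific value $\eta=0.4$ and the prescribed $\delta$ are pinned down. A secondary subtlety is the multiple-plays case, where the reward is the negative gradient of the Jensen-approximated variance of Propositions~\ref{proposition:mp_var_bound}--\ref{proposition:mp_derivative} rather than of $\mathbb{V}_e$ itself, so the gap between the two must be controlled before the self-referential inequality can be closed.
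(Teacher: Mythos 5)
Your plan matches the paper's proof essentially step for step: convexity reduces the variance gap to the EXP3.M regret against $Q_i^{\star}$, the potential $W_i(t)=\sum_j w_{ij}(t)$ is bounded above via $e^a\le 1+a+a^2$ (licensed by the exploration floor $q_{ij}(t)\ge k\eta/n$ together with the stated condition on $T$) and below via AM--GM over any $k$-subset, expectations give $\mathbb{E}[\hat r_{ij}^2(t)]=r_{ij}^2(t)/q_{ij}(t)$, the identity $\langle Q_i^t,\nabla_{Q_i^t}\mathbb{V}_e^t(Q_i^t)\rangle=-\mathbb{V}_e^t(Q_i^t)$ lets the exploration cost be absorbed into the left-hand side (the paper's Lemma~\ref{lemma:1}), and $\eta=0.4$ gives the factor $(1-\eta)/(1-2\eta)=3$. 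The only divergence is the second-order term: you bound $\sum_j r_{ij}^2(t)$ by roughly $(n/(k\eta))^3\,\mathbb{V}_e^t(Q_i^t)$ and absorb it multiplicatively, whereas the paper bounds it by the absolute constant $n^4/(k^4\eta^4)$ (using $\sum_j\alpha_{ij}=1$ and $\|h_j\|\le 1$) so that it lands entirely in the additive $O(\sqrt{T})$ term --- the cleaner choice, since the condition on $T$ only forces your $\delta B\le 1$ rather than $\delta B\ll 1$, which would otherwise inflate the multiplicative factor beyond $3$.
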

Our proof follows \cite{salehi2017stochastic} by upper and lower bounding the potential function. The upper and lower bounds are the functions of the alternative sampling probability $q_{ij}(t)$ and the reward $r_{ij}(t)$ respectively. By multiplying the upper and lower bounds by the optimal sampling probability $q_{i}^{\star}$ and using the reward definition in \eqref{eq:reward1}, we have the upper bound of the effective variance.
The growth of this regret is sublinear in terms of $T$.
The regret decreases in polynomial as sample size $k$ grows.
Note that the number
of neighbors $n$ is always well bounded in pratical graphs,
and can be considered as a moderate constant number.
Compared with existing layer sampling 
approaches~\cite{hamilton2017inductive,chen2018fastgcn,zou2019layer}
that have a fixed variance given the specific estimators, 
this is the first work optimizing the sampling
variance of GNNs towards optimum.
We will empirically show the sampling variances in experiments.

\setlength{\tabcolsep}{1pt}
{\footnotesize
\begin{table*}
  \centering
  \caption{Dataset summary. ``s'' dontes multi-class task, and ``m'' denotes multi-label task.}
  \label{tb:data}
  \begin{tabular}{ccccccccc}
    Dateset & V & E & Degree &  \# Classes & \# Features & \# train & \# validation & \# test \\
    \midrule
    Cora&  $2,708$ & $5,429$ & $2$ & $7$ (s) & $1,433$ & $1,208$ & $500$ & $1,000$ \\
    Pubmed & $19,717$ & $44,338$ & $3$ & $3$ (s)  & $500$ & $18,217$ & $500$ & $1,000$ \\
    PPI &    $56,944$ & $818,716$ & $15$ & $121$ (m) & $50$ & $44,906$ & $6,514$ & $5,524$\\
    Reddit & $232,965$ & $11,606,919$ & $50$ & $41$ (s) & $602$ & $153,932$ & $23,699$ & $55,334$\\
    Flickr & $89,250$ & $899,756$ & $10$ & $7$ (s) & $500$ & $44,625$ & $22,312$ & $22,313$\\
  \bottomrule
\end{tabular}
\end{table*}
}

\section{Experiments}
In this section, we conduct extensive experiments compared with 
state-of-the-art approaches to show the advantage of our training
approaches. We use the following rule
to name our approaches: GNN architecture plus bandit sampler.
For example, \textbf{GCN-BS}, \textbf{GAT-BS} and 
\textbf{GP-BS} denote the training
approaches for GCN, GAT~\cite{velivckovic2017graph} 
and GeniePath~\cite{liu2019geniepath} respectively.

The major purpose of this paper is to compare the effects
of our samplers with existing training algorithms, so 
we compare them by training the same GNN architecture.
We use the following architectures unless otherwise stated.
We fix the number of layers as $2$ as 
in~\cite{kipf2016semi} for all comparison algorithms.
We set the dimension of hidden embeddings as $16$ for Cora and Pubmed,
and $256$ for PPI, Reddit and Flickr. For a fair comparison, we 
do not use the normalization layer~\cite{ba2016layer}
particularly used in some 
works~\cite{chen2017stochastic,zeng2019graphsaint}.
For attentive GNNs, we use the attention layer proposed 
in GAT. we set the number of multi-heads 
as $1$ for simplicity. 

We report results on $5$ benchmark data that include
\textit{Cora}~\cite{sen2008collective}, 
\textit{Pubmed}~\cite{sen2008collective},
\textit{PPI}~\cite{hamilton2017inductive},
\textit{Reddit}~\cite{hamilton2017inductive},
and \textit{Flickr}~\cite{zeng2019graphsaint}.
We follow the standard data splits,
and summarize the statistics in Table~\ref{tb:data}.

{\tiny
\begin{table*}[h]
\caption{Comparisons on the GCN architecture: testing Micro F1 scores.}
\label{tb:bench-gcn}
\begin{center}
\begin{tabular}{llllll}
\toprule
\textbf{Method}  &\textbf{Cora} &\textbf{Pubmed} &\textbf{PPI}& \textbf{Reddit}& \textbf{Flickr} \\
\midrule
GraphSAGE & $0.731 (\pm 0.014)$ &$0.890 (\pm 0.002)$ & $0.689(\pm 0.005)$ & $0.949 (\pm 0.001)$ & $0.494 (\pm 0.001)$ \\
FastGCN & $0.827 (\pm 0.001)$ &$0.895 (\pm 0.005)$ & $0.502(\pm 0.003)$ & $0.825 (\pm 0.006)$ & $0.500 (\pm 0.001)$ \\
LADIES & $0.843 (\pm 0.003)$ &$0.880 (\pm 0.006)$ & $0.574(\pm 0.003)$ & $0.932 (\pm 0.001)$ & $0.465 (\pm 0.007)$ \\
AS-GCN & $0.830 (\pm 0.001)$ &$0.888 (\pm 0.006)$ & $0.599(\pm 0.004)$ & $0.890 (\pm 0.013)$ & $0.506 (\pm 0.012)$ \\
S-GCN & $0.828 (\pm 0.001)$ &$0.893 (\pm 0.001)$ & $0.744(\pm 0.003)$ & $0.943 (\pm 0.001)$ & $0.501 (\pm 0.002)$ \\
ClusterGCN & $0.807 (\pm 0.006)$ &$0.887 (\pm 0.001)$ & $0.853(\pm 0.001)$ & $0.938 (\pm 0.002)$ & $0.418 (\pm 0.002)$ \\
GraphSAINT & $0.815 (\pm 0.012)$ &$0.899 (\pm 0.002)$ & $0.787(\pm 0.003)$ & $\bm{0.965} (\pm 0.001)$ & $0.507 (\pm 0.001)$ \\
\midrule
GCN-BS & $\bm{0.855} (\pm 0.005)$ &$\bm{0.903} (\pm 0.001)$ & $\bm{0.905}(\pm 0.003)$ & $0.957 (\pm 0.000)$ & $\bm{0.513} (\pm 0.001)$ \\
\bottomrule
\end{tabular}
\end{center}
\end{table*}
}

{\tiny
\begin{table*}[h]
\caption{Comparisons on the attentive GNNs architecture: testing Micro F1 scores.}
\label{tb:bench-gat}
\begin{center}
\begin{tabular}{llllll}
\toprule
\textbf{Method}  &\textbf{Cora} &\textbf{Pubmed} &\textbf{PPI}& \textbf{Reddit}& \textbf{Flickr} \\
\midrule
AS-GAT & $0.813 (\pm 0.001)$ &$0.884 (\pm 0.003)$ & $0.566(\pm 0.002)$ & NA & $0.472 (\pm 0.012)$ \\
GraphSAINT-GAT & $0.773 (\pm 0.036)$ &$0.886 (\pm 0.016)$ & $0.789(\pm 0.001)$ & $0.933 (\pm 0.012)$ & $0.470 (\pm 0.002)$ \\
\midrule
GAT-BS & $\bm{0.857} (\pm 0.003)$ &$\bm{0.894} (\pm 0.001)$ & $0.841(\pm 0.001)$ & $0.962 (\pm 0.001)$ & $\bm{0.513} (\pm 0.001)$ \\
GAT-BS.M & $\bm{0.857} (\pm 0.003)$ &$\bm{0.894} (\pm 0.000)$ & $0.867(\pm 0.003)$ & $0.962 (\pm 0.000)$ & $\bm{0.513} (\pm 0.001)$ \\
GP-BS & $0.811 (\pm 0.002)$ &$0.890 (\pm 0.003)$ & ${0.958}(\pm 0.001)$ & $\bm{0.964} (\pm 0.000)$ & $0.507 (\pm 0.000)$ \\
GP-BS.M & $0.811 (\pm 0.001)$ &$0.892 (\pm 0.001)$ & $\bm{0.965}(\pm 0.001)$ & $\bm{0.964} (\pm 0.000)$ & $0.507 (\pm 0.000)$ \\
\bottomrule
\end{tabular}
\end{center}
\end{table*}
\vspace{-0.1cm}
}

We summarize the comparison algorithms as follows.
\textbf{(1)} GraphSAGE~\cite{hamilton2017inductive} is
a node-wise layer sampling approach with a random 
sampler.
\textbf{(2)} FastGCN~\cite{chen2018fastgcn}, 
LADIES~\cite{zou2019layer}, and 
AS-GCN~\cite{huang2018adaptive} are
layer sampling approaches based on importance sampling.
\textbf{(3)} S-GCN~\cite{chen2017stochastic} can be 
viewed as an optimization solver for training of 
GCN based on a simply random sampler.
\textbf{(4)} ClusterGCN~\cite{chiang2019cluster} and
GraphSAINT~\cite{zeng2019graphsaint} are 
``graph sampling'' techniques that first partition 
or sample the graph into small subgraphs, then train 
each subgraph using the batch algorithm~\cite{kipf2016semi}.
\textbf{(5)} The open source algorithms that support the training
of attentive GNNs are AS-GCN and GraphSAINT. We denote
them as AS-GAT and GraphSAINT-GAT.

We do grid search for the following hyperparameters 
in each algorithm, i.e., the learning rate $\{0.01, 0.001\}$,
the penalty weight on the $\ell_2$-norm regularizers 
$\{0, 0.0001, 0.0005, 0.001\}$,
the dropout rate $\{0, 0.1, 0.2, 0.3\}$. By following the exsiting
implementations\footnote{Checkout: 
\url{https://github.com/matenure/FastGCN} or 
\url{https://github.com/huangwb/AS-GCN}}, 
we save the model based on the best
results on validation, and restore the model to report 
results on testing data in Section~\ref{sec:benchmark}.
For the sample size $k$ in GraphSAGE, S-GCN and our algorithms, 
we set $1$ for Cora and Pubmed,
$5$ for Flickr, $10$ for PPI and reddit.
We set the sample size in the first and second layer
for FastGCN and AS-GCN/AS-GAT as 
256 and 256 for Cora and Pubmed,
$1,900$ and $3,800$ for PPI, $780$ and $1,560$
for Flickr, and $2,350$ and $4,700$ for Reddit.
We set the batch size of all the layer sampling approaches
and S-GCN as $256$ for all the datasets. For ClusterGCN, 
we set the partitions
according to the suggestions~\cite{chiang2019cluster} 
for PPI and Reddit. We set the number of partitions 
for Cora and Pubmed as 10, for flickr as 200
by doing grid search. We set the architecture of GraphSAINT as 
``0-1-1''\footnote{Checkout \url{https://github.com/GraphSAINT/} 
for more details.} which means MLP layer followed by two 
graph convolution layers. We use the ``rw'' sampling 
strategy that reported as the best in their original paper
to perform the graph sampling procedure. We set the number 
of root and walk length as the paper suggested.

\begin{figure*}[h]
\includegraphics[width=0.33\textwidth]{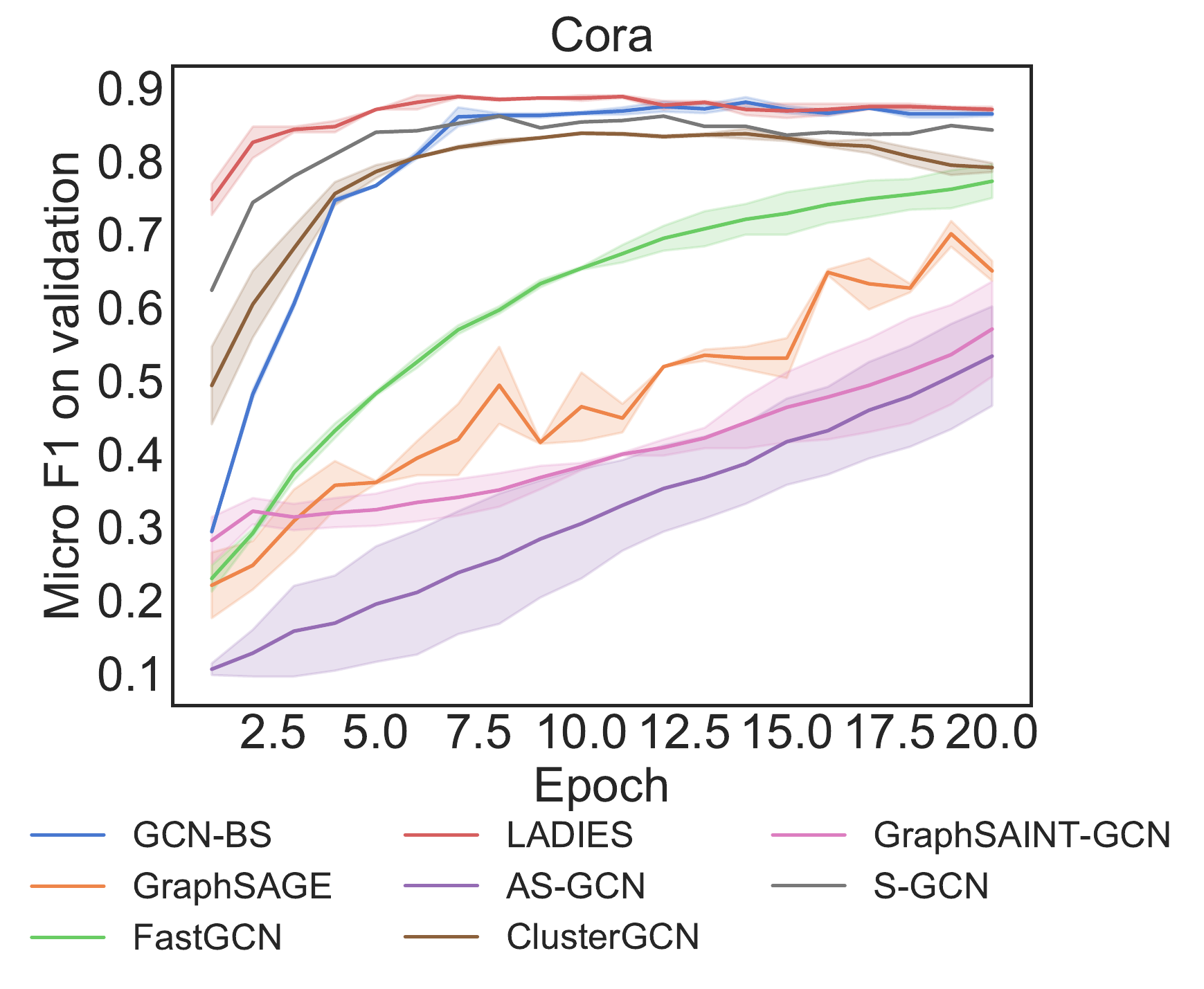}
\includegraphics[width=0.33\textwidth]{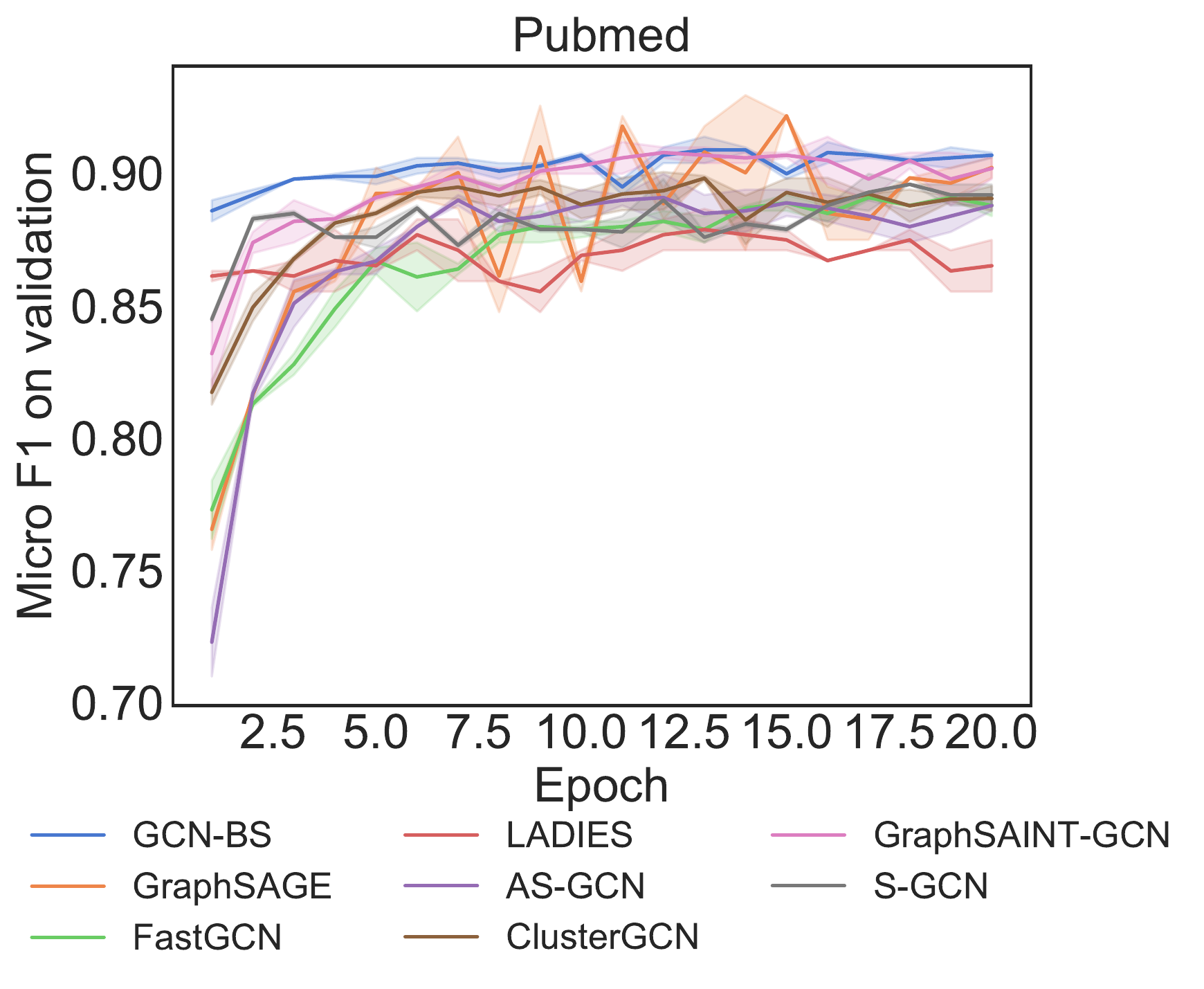}
\includegraphics[width=0.33\textwidth]{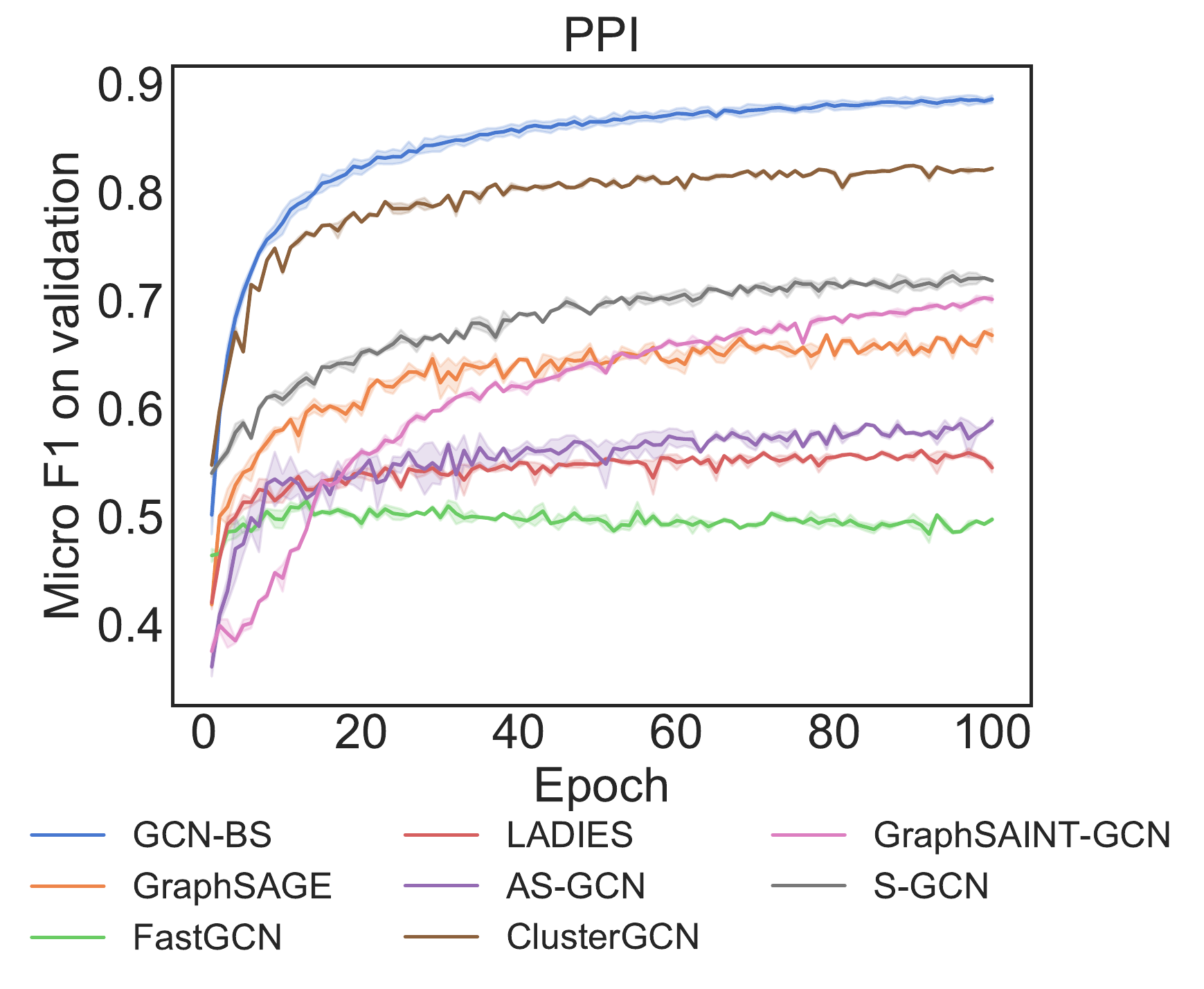}
\includegraphics[width=0.33\textwidth]{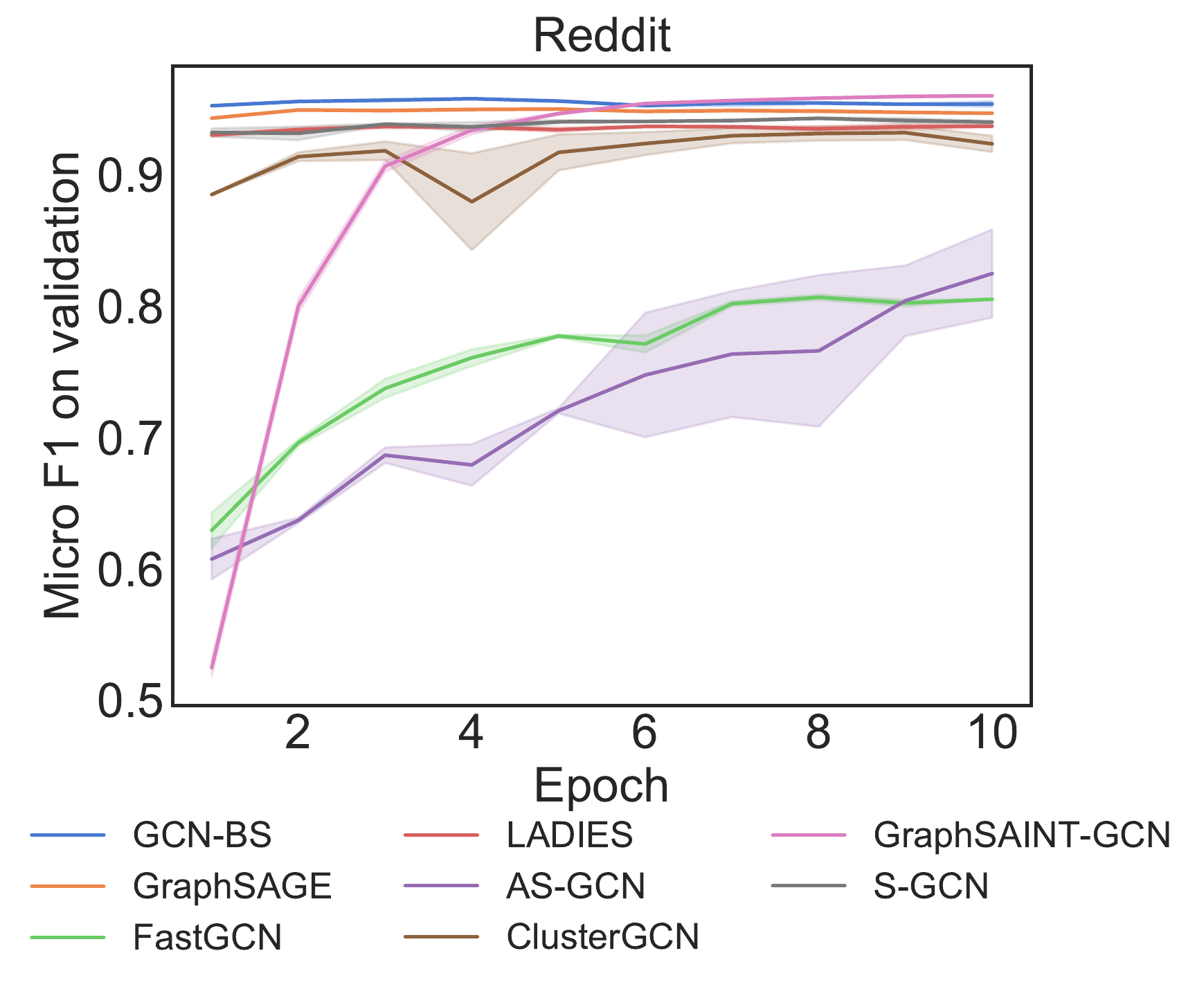}
\includegraphics[width=0.33\textwidth]{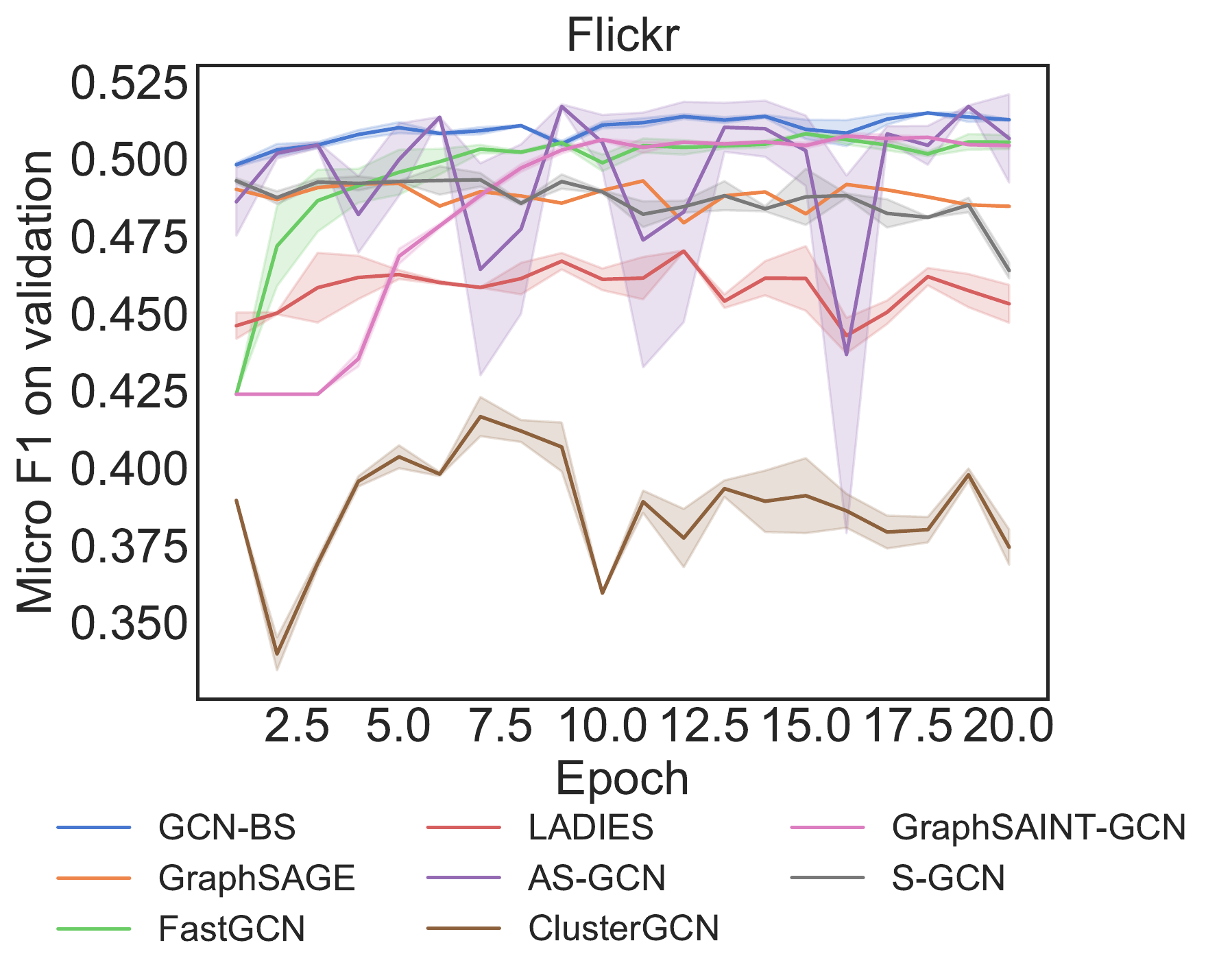}
\includegraphics[width=0.33\textwidth]{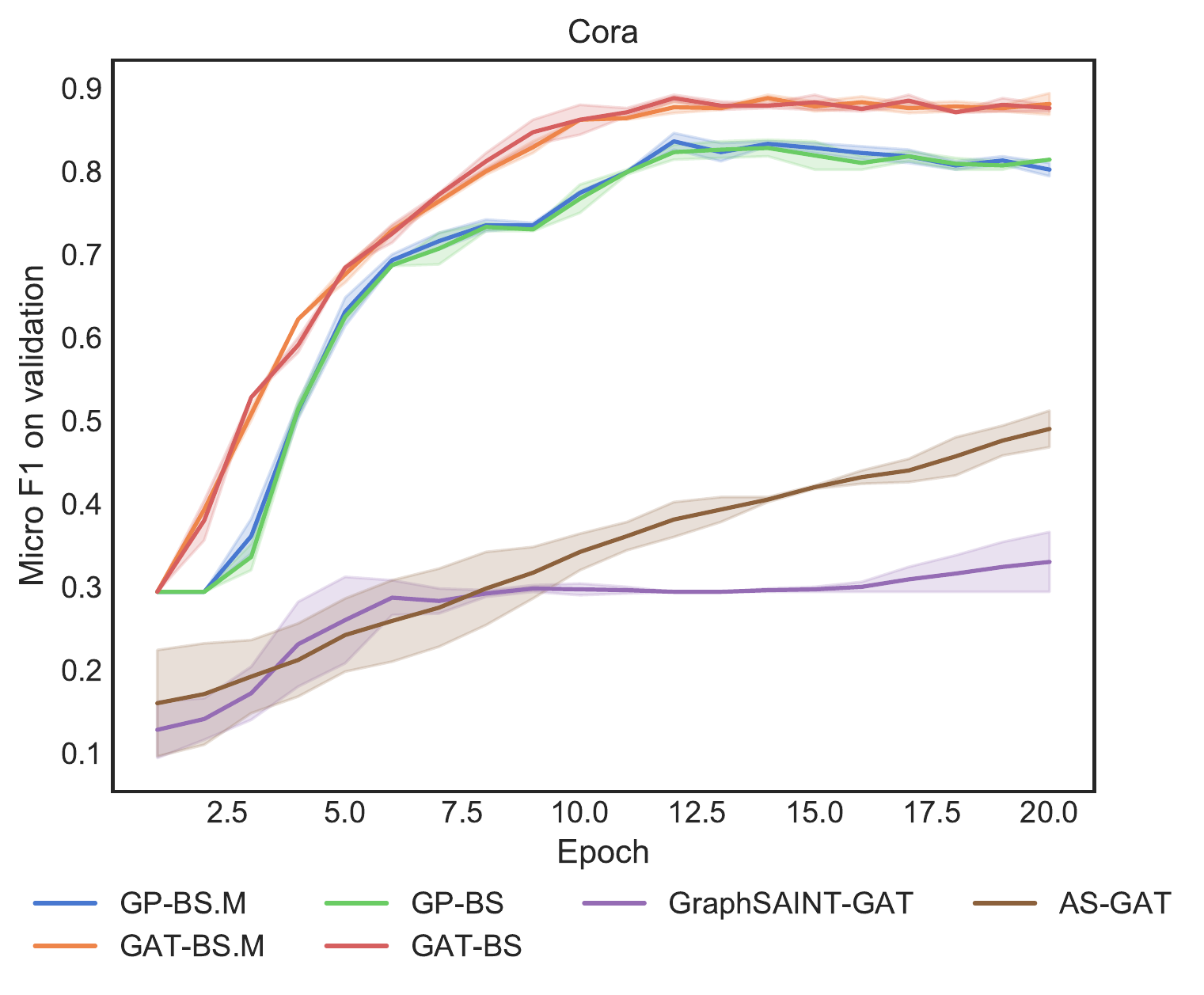}
\includegraphics[width=0.33\textwidth]{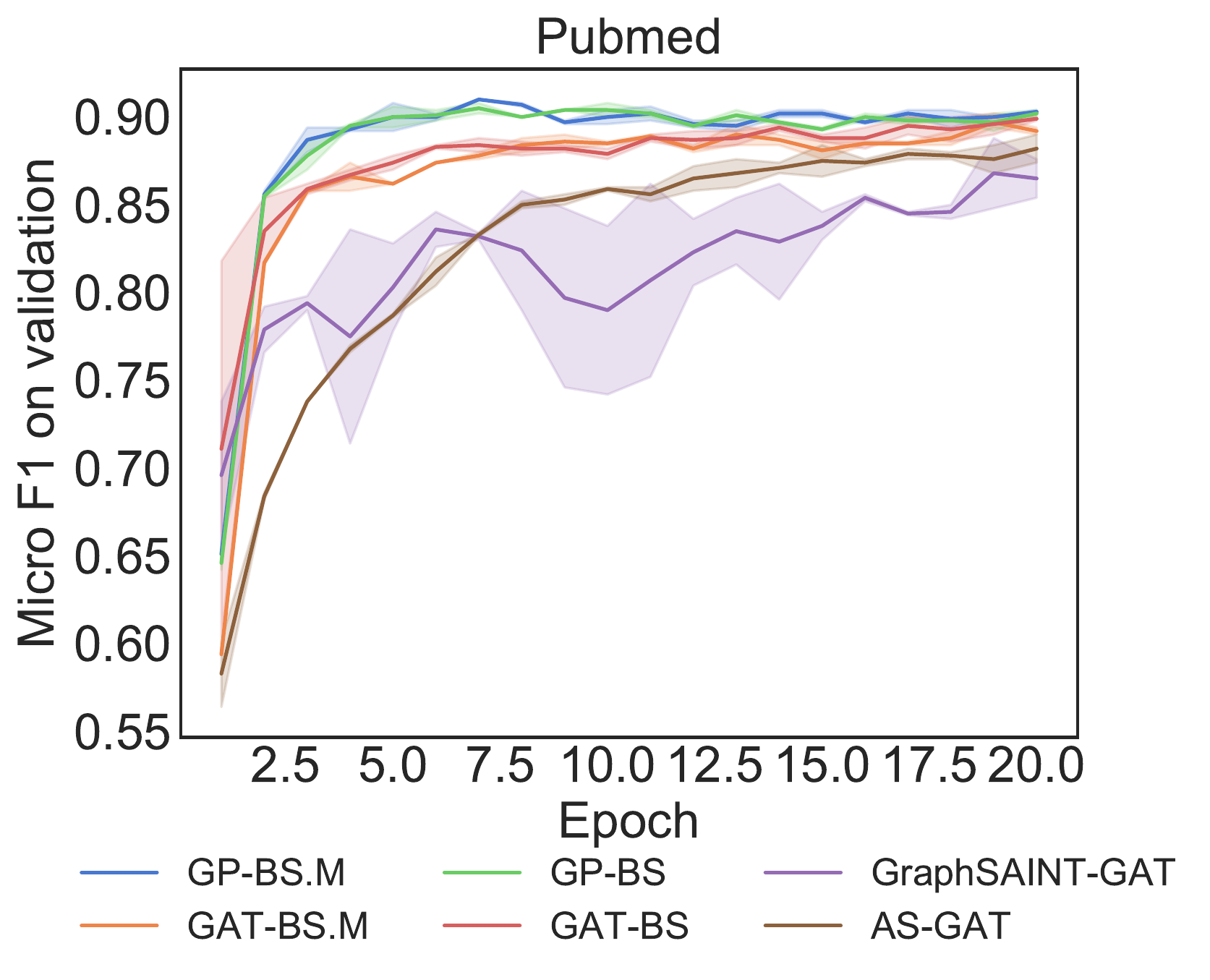}
\includegraphics[width=0.33\textwidth]{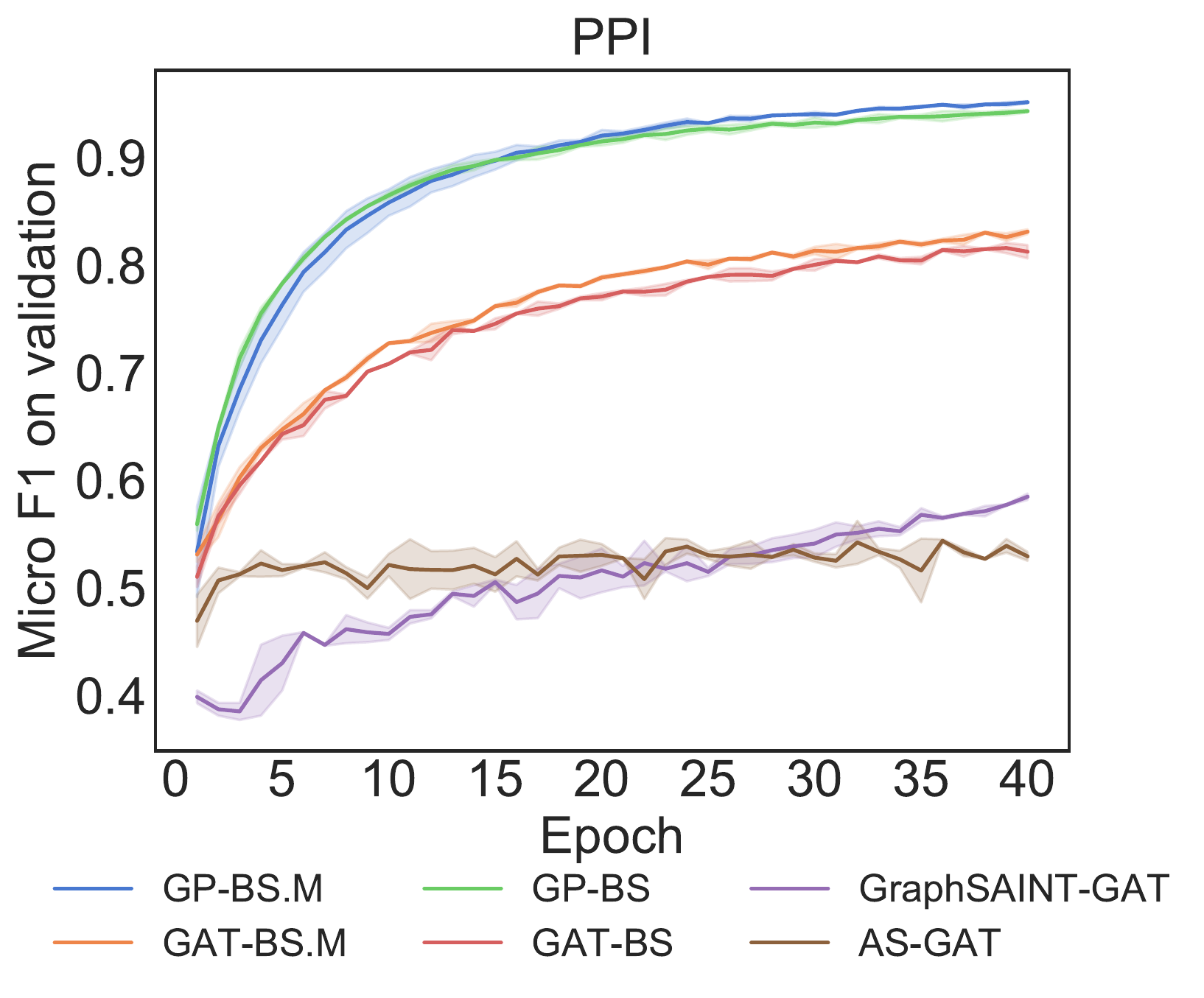}
\includegraphics[width=0.33\textwidth]{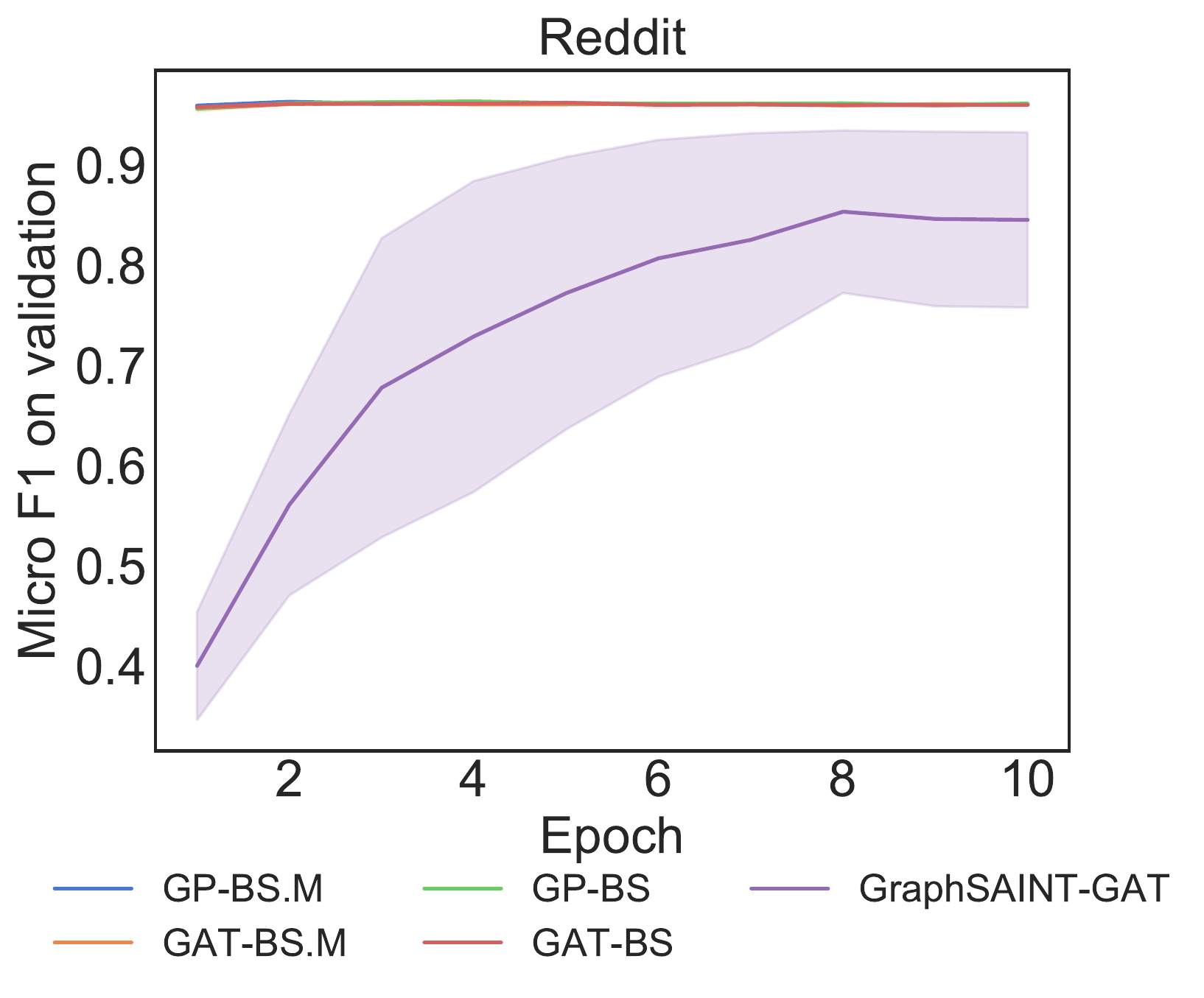}
\vspace{-0.2cm}
\caption{The convergence on validation in terms of epochs.}
\label{fig:convergence_gcn}
\end{figure*}
\vspace{-0.2cm}

\subsection{Results on Benchmark Data}\label{sec:benchmark}
We report the testing results on GCN and attentive
GNN architectures in Table~\ref{tb:bench-gcn} and 
Table~\ref{tb:bench-gat} respectively.
We run the results of each algorithm $3$ times and
report the mean and standard deviation.
The results on the two layer GCN architecture show that our
GCN-BS performs the best on most of datasets. 
The results on the two layer attentive GNN architecture show
the superiority of our algorithms on training more complex
GNN architectures. GraphSAINT or AS-GAT cannot compute 
the softmax of learned weights, but
simply use the unnormalized weights to perform the aggregation.
As a result, most of results from AS-GAT and GraphSAINT-GAT
in Table~\ref{tb:bench-gat} are worse than their results
in Table~\ref{tb:bench-gcn}. Thanks to the power of 
attentive structures in GNNs, our algorithms perform
better results on PPI and Reddit compared with
GCN-BS, and significantly outperform the results from 
AS-GAT and GraphSAINT-GAT.


\subsection{Convergence}
In this section, we analyze the convergences of 
comparison algorithms on the two layer GCN and 
attentive GNN architectures in Figure~\ref{fig:convergence_gcn} 
in terms of epoch.
We run all the algorithms $3$ times and show the mean and standard 
deviation.
Our approaches consistently converge to better results 
with faster rates and lower variances in most of
datasets like Pubmed, PPI, Reddit and Flickr compared with
the state-of-the-art approaches.
The GNN-BS algorithms perform very similar to GNN-BS.M, even though
strictly speaking GNN-BS does not follow the rigorous MAB setting.
Furthermore, we show a huge improvement on the training
of attentive GNN architectures compared with GraphSAINT-GAT
and AS-GAT.
The convergences on validation in terms of 
timing (seconds), compared with layer sampling
approaches, in Appendix~\ref{appendix:convergences} 
show the similar results. 
We further give a discussion about timing
among layer sampling approaches and graph sampling approaches
in Appendix~\ref{appendix:layer_vs_graph}.

\begin{figure}[h]
\includegraphics[width=0.49\textwidth]{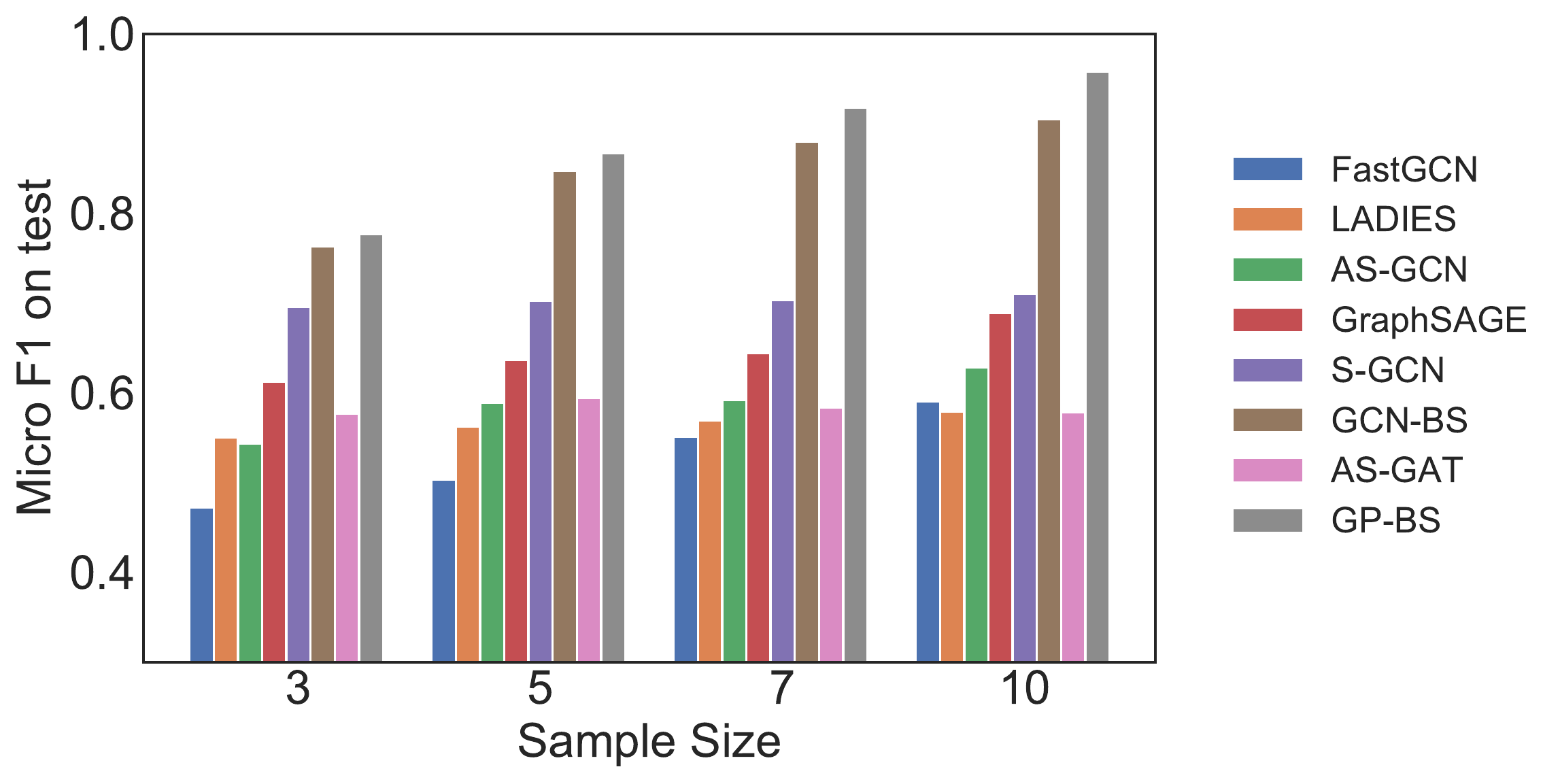}
\includegraphics[width=0.49\textwidth]{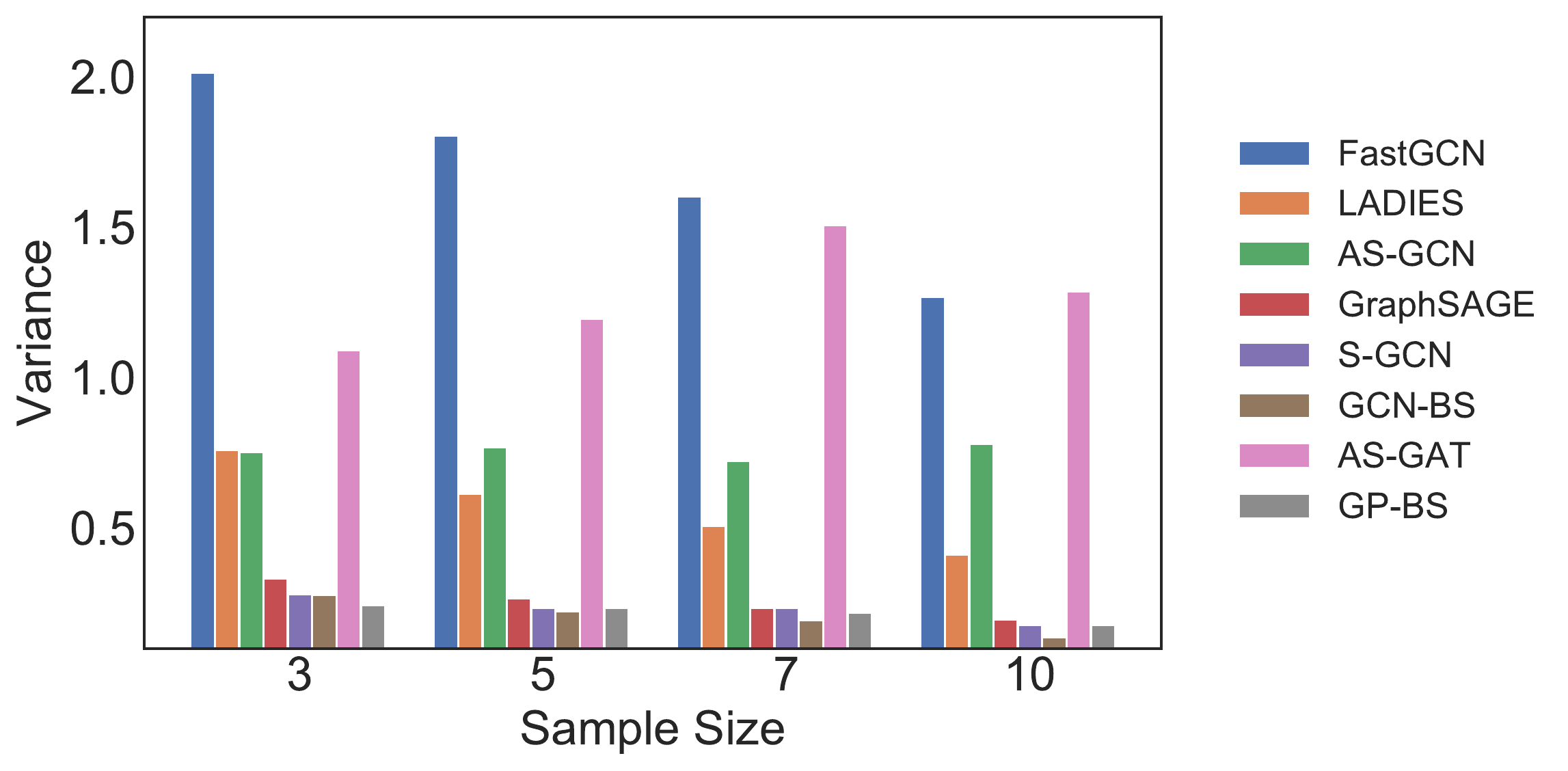}
\caption{Comparisons on PPI by varying the sample sizes: 
(\textbf{left}) F1 score, (\textbf{right}) sample variances.}
\label{fig:sample_size}
\end{figure}

\subsection{Sample Size Analysis}
We analyze the sampling variances and accuracy
as sample size varies using PPI data.
Note that existing layer sampling approaches do not
optimize the variances once the 
samplers are specified. As a result, their
variances are simply fixed~\cite{zou2019layer}, while
our approaches asymptotically appoach the optimum.
For comparison, we train our models until convergence,
then compute the average sampling variances. 
We show the results in Figure~\ref{fig:sample_size}.
The results are grouped into two categories, i.e.
results for GCNs and attentive GNNs respectively.
The sampling variances of our approaches are smaller 
in each group, and even be smaller than the 
variances of S-GCN that leverages a variance reduction solver. 
This explains the
performances of our approaches on testing Micro F1 scores.
We also find that the overall sampling variances of node-wise
approaches are way better than those of layer-wise
approaches.

%
%
%

\section{Conclusions}
In this paper, we show that the optimal layer samplers
based on importance sampling for training general 
graph neural networks are computationally intractable,
since it needs all the neighbors' hidden embeddings 
or learned weights.
Instead, we re-formulate the sampling problem as a bandit problem
that requires only partial knowledges from neighbors being sampled.
We propose two algorithms based on multi-armed bandit and MAB with
multiple plays, and show the variance of our bandit sampler 
asymptotically approaches the optimum within a factor of $3$.
Furthermore, our algorithms are not only applicable to 
GCNs but more general architectures like attentive GNNs.
We empirically show that our algorithms can converge to 
better results with faster rates and lower variances
compared with state-of-the-art approaches.


%

%
%


\clearpage
\appendix

\section{Algorithms}\label{appendix:alg}
\begin{algorithm}
\caption{$\mathrm{EXP3}(q_i^t,w_i^t,r_i^t,S_i^t)$.}
\label{alg:exp3}
\begin{algorithmic}[1]
\Require $\eta=0.4$, sample size $k$, neighbor size $n = |\mathcal{N}_i|$, $\delta=\sqrt{(1-\eta)\eta^4k^5\ln(n/k)/(Tn^4)}$.
\State Set 
	$$
	\hat{r}_{ij}(t)=
		r_{ij}(t)/q_{ij}(t)\; \text{if}\; j\in S_i^t \;\text{else}\;0
	$$
	$$
	w_{ij}(t+1)= w_{ij}(t)\exp(\delta\,\hat{r}_{ij}(t)/n) 
	$$
\State Set
$
q_{ij}(t+1) \gets (1-\eta) \frac{w_{ij}(t+1)}{\sum_{j\in \mathcal{N}_i} w_{ij}(t+1)} + 
\frac{\eta}{n}, \,\,\,\,\,\text{for}\,\, j \in \mathcal{N}_i
$
\end{algorithmic}
\end{algorithm}

\begin{algorithm}[ht]
\caption{EXP3.M$(q_i^t, w_i^t, r_i^t, S_i^t)$}
\label{alg:exp3m}
\begin{algorithmic}[1]
\Require $\eta=0.4$, sample size $k$, neighbor size $n=\left|\mathcal{N}_i\right|$, $\delta=\sqrt{(1-\eta)\eta^4 k^5 \ln(n/k)/(T n^4)}$, $U_i^t=\emptyset$.
	\State For $j\in\mathcal{N}_i$ set
	$$
	\hat{r}_{ij}(t)=
	\begin{cases}
		r_{ij}(t)/q_{ij}(t)& \text{if}\; j\in S_i^t \\
		0& \text{otherwise}
	\end{cases}
	$$
	$$
	w_{ij}(t+1)=
	\begin{cases}
		w_{ij}(t)\exp(\delta\hat{r}_{ij}(t)/n)& \text{if}\; j\notin U_i^t \\
		w_{ij}(t)& \text{otherwise}
	\end{cases}
	$$
	\If {$\mathop{\max}_{j\in\mathcal{N}_i}w_{ij}(t+1)\geq (\frac{1}{k}-\frac{\eta}{n})\sum_{j\in\mathcal{N}_i}{w_{ij}(t+1)/(1-\eta)}$}
		\State Decide $a_t$ so as to satisfy
		\begin{align*}
			\frac{a_t}{\sum_{w_{ij}(t+1)\geq a_t}a_t+\sum_{w_{ij}(t+1)<a_t}w_{ij}(t+1)} 
		    =(\frac{1}{k}-\frac{\eta}{n})/(1-\eta)
		\end{align*}
		\State Set $U_i^{t+1}=\{j:w_{ij}(t+1)\geq a_t\}$
                \Else
		\State Set $U_i^{t+1}=\emptyset$
                \EndIf
	\State Set
		$
		w_{ij}^{\prime}(t+1)=
		\begin{cases}
			w_{ij}(t+1)& \text{if}\; j\in\mathcal{N}_i\backslash U_i^{t+1} \\
			a_t& \text{if}\; j\in U_i^{t}
		\end{cases}
		$
	\State Set
		$
		q_{ij}(t+1)=k\left((1-\eta)\frac{w_{ij}^{\prime}(t+1)}{\sum_{j\in\mathcal{N}_i}w_{ij}^{\prime}(t+1)} + \frac{\eta}{n}\right)
		$ for
		$
		j\in\mathcal{N}_i
		$
\end{algorithmic}
\end{algorithm}

\begin{algorithm}
\caption{DepRound$(k,(q_{1},q_{2},...,q_{K}))$}
\label{alg:dep_round}
\begin{algorithmic}[1]
\State \textbf{Input:} Sample size $k(k<K)$, sample distribution $(q_1,q_2,...,q_K)$ with $\sum_{i=1}^{K}q_i=k$
\State \textbf{Output:} Subset of $[K]$ with $k$ elements
\While{there is an $i$ with $0<q_i<1$}
    \State Choose distinct $i$ and $j$ with $0<q_i<1$ and $0<q_j<1$
    \State Set $\beta=\min\{1-q_i, q_j\}$ and $\gamma=\min\{q_i,1-q_j\}$ 
    \State Update $q_i$ and $q_j$ as
    $$
    (q_i,q_j)=\begin{cases}
		(q_i+\beta,q_j-\beta)\; \text{with probability}\; \frac{\gamma}{\beta+\gamma} \\
		(q_i-\gamma,q_j+\gamma)\; \text{with probability}\; \frac{\beta}{\beta+\gamma}
	\end{cases}
    $$
\EndWhile
\State \textbf{return} $\{i:q_i=1,1\leq i\leq K\}$
\end{algorithmic}
\end{algorithm}

\section{Proofs}\label{appendix:proof}
\setcounter{theorem}{0}
\setcounter{property}{0}
\setcounter{proposition}{0}

\begin{proposition}\label{proposition:estimator2}
$\hat{\mu}_i = \sum_{j_s\in S_i}\frac{\alpha_{ij_s}}{q_{ij_s}}h_{j_s}$ 
is the unbiased estimator of $\mu_i=\sum_{j\in\mathcal{N}_i}\alpha_{ij}h_j$ 
given that $S_i$ is sampled from $q_i$ using the DepRound 
sampler $Q_{i}$, where $S_i$ is 
the selected $k$-subset neighbors of vertex $i$.
\end{proposition}
\begin{proof}
Let us denote $Q_{i,S_i}$ as the probability of vertex 
$v_i$ choosing any $k$-element subset $S_i \subset \mathcal{N}_i$ 
from the $K$-element set $\mathcal{N}_i$ using DepRound sampler $Q_i$.
This sampler follows the alternative sampling distribution
$q_i = (q_{ij_{1}}, ..., q_{ij_{K}})$ where $q_{ij_{s}}$ denotes
the alternative probability of sampling neighbor $v_{j_s}$.
This sampler is guaranteed to satisfy $\sum_{S_i : j\in S_i}Q_{i,S_i} = q_{ij}$, i.e. the sum over the probabilities of all subsets $S_i$ that
contains element $j$ equals the probability $q_{ij}$.
\begin{align}
	\mathbb{E}\left[ \hat{\mu}_i \right]
	&= \mathbb{E}\left[\sum_{j_s\in S_i}\frac{\alpha_{ij_s}}{q_{ij_s}}h_{j_s}\right] \\ 
	&= \sum_{S_i\subset\mathcal{N}_i}Q_{i,S_i}\sum_{j_s\in S_i}\frac{\alpha_{ij_s}}{q_{ij_s}}h_{j_s} \\
	&= \sum_{j\in\mathcal{N}_i}\sum_{S_i:j\in S_i}Q_{i,S_i}\frac{\alpha_{ij}}{q_{ij}}h_j \\
	&= \sum_{j\in\mathcal{N}_i}\frac{\alpha_{ij}}{q_{ij}}h_j\sum_{S_i:j\in S_i}Q_{i,S_i} \\
	&= \sum_{j\in\mathcal{N}_i}\frac{\alpha_{ij}}{q_{ij}}h_j q_{ij} \label{eq1} \\
	&= \sum_{j\in\mathcal{N}_i}\alpha_{ij}h_j
\end{align}
\end{proof}

\begin{proposition}\label{proposition:mp_var_bound}
The effective variance can be approximated by $\mathbb{V}_e(Q_i) \leq \sum_{j_s \in \mathcal{N}_i} \frac{\alpha_{ij_s}}{q_{ij_s}}\|h_{j_s}\|^2$.
\end{proposition}
\begin{proof}
The variance is
\begin{align*}
\mathbb{V}(Q_i) &= \mathbb{E}\left[\left\|\sum_{j_s\in S_i}\frac{\alpha_{i j_s}}{q_{ij_s}}h_{j_s} - \sum_{j\in\mathcal{N}_i}\alpha_{ij}h_{j} \right\|^2\right] \\
	&= \sum_{S_i\subset\mathcal{N}_i}Q_{i,S_i}\left\|\sum_{j_s\in S_i}\frac{\alpha_{ij_s}}{q_{ij_s}}h_{j_s}\right\|^2 - \left\|\sum_{j\in\mathcal{N}_i}\alpha_{ij} h_{j}\right\|^2.
\end{align*}

Therefore the effective variance has following upper bound:
\begin{align*}
\mathbb{V}_e(Q_{i}) &= \sum_{S_i\subset\mathcal{N}_i}Q_{i,S_i}\left\|\sum_{j_s\in S_i}\frac{\alpha_{ij_s}}{q_{ij_s}}h_{j_s}\right\|^2 \\
	&\leq \sum_{S_i\subset\mathcal{N}_i} Q_{i,S_i}\sum_{j_s\in S_i}\alpha_{ij_s}\left\|\frac{h_{j_s}}{q_{ij_s}}\right\|^2 \;(Jensen's\, Inequality) \\
	&= \sum_{j_s\in\mathcal{N}_i}\sum_{S_i:j_s\in S_i}Q_{i,S_i}\alpha_{ij_s}\left\|\frac{h_{j_s}}{q_{ij_s}}\right\|^2 \\
	&= \sum_{j_s\in\mathcal{N}_i}\frac{\alpha_{ij_s}}{q_{ij_s}^2}\|h_{j_s}\|^2\sum_{S_i:j_s\in S_i}Q_{i,S_i} \\
	&= \sum_{j_s\in\mathcal{N}_i}\frac{\alpha_{ij_s}}{q_{ij_s}}\|h_{j_s}\|^2
\end{align*}
\end{proof}

\begin{proposition}\label{proposition:mp_derivative}
The negative derivative of the approximated effective variance $\sum_{j_s \in \mathcal{N}_i} \frac{\alpha_{ij_s}}{q_{ij_s}}\|h_{j_s}\|^2$ w.r.t $Q_{i,S_i}$,
i.e. the reward of $v_i$ choosing $S_i$ at $t$, is
$r_{i,S_i}(t) = \sum_{j_s \in S_i} \frac{\alpha_{ij_s}}{q_{ij_s}(t)^2}\|h_{j_s}(t)\|^2$.
\end{proposition}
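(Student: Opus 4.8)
The plan is to differentiate the approximated effective variance from Proposition~\ref{proposition:mp_var_bound}, which I write as $\widetilde{\mathbb{V}}_e(Q_i) = \sum_{j \in \mathcal{N}_i} \frac{\alpha_{ij}}{q_{ij}}\|h_j\|^2$, directly with respect to a single subset probability $Q_{i,S_i}$, treating the marginal sampling probabilities $q_{ij}$ as the intermediate quantities through which all of the dependence on $Q_{i,S_i}$ flows. The essential observation is that $\widetilde{\mathbb{V}}_e$ is a function of the $Q_{i,S_i}$'s \emph{only} through the marginals, which by the DepRound property used in Proposition~\ref{proposition:estimator2} satisfy $q_{ij} = \sum_{S_i : j \in S_i} Q_{i,S_i}$.

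First I would record the partial derivative of each marginal with respect to a subset probability. Since $q_{ij}$ is a linear sum over exactly those subsets that contain $j$, we have $\partial q_{ij}/\partial Q_{i,S_i} = \mathbf{1}[j \in S_i]$, i.e. it equals $1$ when $j$ lies in the chosen subset $S_i$ and $0$ otherwise. Next, applying the chain rule together with $\frac{d}{dx}(1/x) = -1/x^2$ to each summand gives $\frac{\partial}{\partial Q_{i,S_i}}\frac{\alpha_{ij}}{q_{ij}}\|h_j\|^2 = -\frac{\alpha_{ij}}{q_{ij}^2}\|h_j\|^2\,\mathbf{1}[j\in S_i]$. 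Summing over $j \in \mathcal{N}_i$ then collapses the sum to run only over $j \in S_i$, and negating the result yields the reward $r_{i,S_i}(t) = \sum_{j_s \in S_i}\frac{\alpha_{ij_s}}{q_{ij_s}(t)^2}\|h_{j_s}(t)\|^2$, exactly as claimed.

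The computation itself is routine differentiation; the only point requiring care, and the step I would treat as the crux, is justifying that the DepRound relation $q_{ij} = \sum_{S_i : j \in S_i} Q_{i,S_i}$ may be used as a plain linear identity when differentiating, so that the distribution $Q_i$ enters the objective solely through these marginals and $\partial q_{ij}/\partial Q_{i,S_i}$ is simply an indicator. This is precisely the marginalization property invoked earlier, and it is what makes the gradient factorize cleanly into a sum over the arms contained in $S_i$, giving the per-arm reward $r_{ij}(t) = \frac{\alpha_{ij}}{q_{ij}(t)^2}\|h_j(t)\|^2$ that feeds into EXP3.M.
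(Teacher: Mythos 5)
Your proof is correct and follows essentially the same route as the paper's: substitute the DepRound identity $q_{ij} = \sum_{S_i' : j \in S_i'} Q_{i,S_i'}$ into the approximated effective variance, observe that only the terms with $j \in S_i$ depend on $Q_{i,S_i}$, and apply the chain rule to obtain $-\sum_{j_s \in S_i}\frac{\alpha_{ij_s}}{q_{ij_s}^2}\|h_{j_s}\|^2$. Your explicit note that $\partial q_{ij}/\partial Q_{i,S_i} = \mathbf{1}[j\in S_i]$ is exactly the step the paper labels ``chain rule.''
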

\begin{proof}
Define the upper bound as $\hat{\mathbb{V}}_e(Q_i)= \sum_{j_s\in\mathcal{N}_i}\frac{\alpha_{ij_s}}{q_{ij_s}}\|h_{j_s}\|^2$,
then its derivative is 
\begin{align*}
\nabla_{Q_{i,S_i}}\hat{\mathbb{V}}_e(Q_i) &= \nabla_{Q_{i,S_i}}\sum_{j_s\in\mathcal{N}_i}\frac{\alpha_{ij_s}}{q_{ij_s}}\|h_{j_s}\|^2 \\
	&= \nabla_{Q_{i,S_i}}\sum_{j_s\in\mathcal{N}_i}\frac{\alpha_{ij_s}}{\sum_{S_i':j_s\in S_i'}Q_{i,S_i'}}\|h_{j_s}\|^2 \\
	&= \nabla_{Q_{i,S_i}}\sum_{j_s\in S_i}\frac{\alpha_{ij_s}}{\sum_{S_i':j_s\in S_i'}Q_{i,S_i'}}\|h_{j_s}\|^2 \\
	&= -\sum_{j_s\in S_i}\frac{\alpha_{j_s}}{q_{ij_s}^2}\|h_{j_s}\|^2 \; (chain\, rule)
\end{align*}
\end{proof}

Before we give the proof of Theorem~\ref{theorem:bs},
we first prove the following Lemma~\ref{lemma:1} that
will be used later.
\begin{lemma}\label{lemma:1}
	For any real value constant $\eta\leq 1$ and any valid distributions $Q_i^t$ and $Q_i^{\star}$ we have
	\begin{equation}
		(1-2\eta)\mathbb{V}_e^t(Q_i^t)-(1-\eta)\mathbb{V}_e^t(Q_i^{\star})\leq \langle Q_i^t-Q_i^{\star}, \nabla_{Q_i^t}\mathbb{V}_e^t(Q_i^t)\rangle + \eta\langle Q_i^{\star}, \nabla_{Q_i^t}\mathbb{V}_e^t(Q_i^t)\rangle
	\end{equation}	
\end{lemma}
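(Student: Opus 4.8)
The plan is to prove the inequality directly by manipulating the two inner products on the right-hand side, exploiting the explicit form of the effective variance $\mathbb{V}_e^t(Q_i)=\sum_{j}\frac{1}{q_{ij}}\alpha_{ij}^2\|h_j\|^2$ (for the MAB case; analogously the approximated version for multiple plays). The key observation is that $\mathbb{V}_e^t$ is a sum of terms of the form $c_{ij}/q_{ij}$ with $c_{ij}=\alpha_{ij}^2\|h_j\|^2\geq 0$, so I can work coordinate-wise. First I would compute $\nabla_{q_{ij}^t}\mathbb{V}_e^t(Q_i^t)=-c_{ij}/(q_{ij}^t)^2$, which is exactly the negative reward, and write both inner products as explicit sums over neighbors $j$.

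Next I would expand the right-hand side. The combination $\langle Q_i^t-Q_i^\star,\nabla\rangle + \eta\langle Q_i^\star,\nabla\rangle$ equals $\langle Q_i^t,\nabla\rangle - (1-\eta)\langle Q_i^\star,\nabla\rangle$. Using $\nabla_{q_{ij}^t}\mathbb{V}_e^t = -c_{ij}/(q_{ij}^t)^2$, the first piece $\langle Q_i^t,\nabla\rangle = \sum_j q_{ij}^t\cdot(-c_{ij}/(q_{ij}^t)^2) = -\sum_j c_{ij}/q_{ij}^t = -\mathbb{V}_e^t(Q_i^t)$, a clean identity from the fact that $\mathbb{V}_e^t$ is homogeneous of degree $-1$ in $q_i$ (Euler's relation). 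The second piece $-(1-\eta)\langle Q_i^\star,\nabla\rangle = (1-\eta)\sum_j q_{ij}^\star c_{ij}/(q_{ij}^t)^2$. So the right-hand side becomes $-\mathbb{V}_e^t(Q_i^t) + (1-\eta)\sum_j \frac{q_{ij}^\star c_{ij}}{(q_{ij}^t)^2}$.

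The remaining task is to lower-bound the cross term $\sum_j \frac{q_{ij}^\star c_{ij}}{(q_{ij}^t)^2}$ in terms of $\mathbb{V}_e^t(Q_i^t)$ and $\mathbb{V}_e^t(Q_i^\star)$, so that after rearranging I recover $(1-2\eta)\mathbb{V}_e^t(Q_i^t)-(1-\eta)\mathbb{V}_e^t(Q_i^\star)$ on the left. The natural tool is a pointwise elementary inequality: for each $j$, I want something like $\frac{q_{ij}^\star}{(q_{ij}^t)^2}c_{ij} \geq \frac{2}{q_{ij}^t}c_{ij} - \frac{1}{q_{ij}^\star}c_{ij}$, which is exactly the tangent-line (convexity) bound for the convex function $x\mapsto 1/x$ evaluated appropriately, or equivalently follows from $(q_{ij}^\star - q_{ij}^t)^2\geq 0$ after dividing by $q_{ij}^\star (q_{ij}^t)^2$. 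Summing this weighted by $c_{ij}\geq 0$ gives $\sum_j \frac{q_{ij}^\star c_{ij}}{(q_{ij}^t)^2}\geq 2\mathbb{V}_e^t(Q_i^t)-\mathbb{V}_e^t(Q_i^\star)$. Substituting back yields right-hand side $\geq -\mathbb{V}_e^t(Q_i^t)+(1-\eta)(2\mathbb{V}_e^t(Q_i^t)-\mathbb{V}_e^t(Q_i^\star)) = (1-2\eta)\mathbb{V}_e^t(Q_i^t)-(1-\eta)\mathbb{V}_e^t(Q_i^\star)$, which is precisely the claim.

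I expect the main obstacle to be pinning down the exact elementary inequality and checking that the nonnegativity of the coefficients $c_{ij}$ and the validity (nonnegativity, normalization) of $Q_i^t,Q_i^\star$ are all that is needed — in particular verifying that the per-coordinate tangent-line bound $\frac{q^\star}{(q^t)^2}\geq \frac{2}{q^t}-\frac{1}{q^\star}$ holds for all positive $q^t,q^\star$, which reduces to $(q^t-q^\star)^2\geq 0$. A secondary subtlety is handling the multiple-plays setting, where $\mathbb{V}_e$ is replaced by its Jensen upper bound from Proposition~\ref{proposition:mp_var_bound} and the marginal constraint $\sum_{S_i:j\in S_i}Q_{i,S_i}=q_{ij}$ must be invoked so that the same coordinate-wise argument goes through with $q_{ij}$ in place of the per-arm probabilities; but the algebraic core is identical.
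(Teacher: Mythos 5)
Your proof is correct and rests on exactly the same two ingredients as the paper's: the Euler-type identity $\langle Q_i^t,\nabla_{Q_i^t}\mathbb{V}_e^t(Q_i^t)\rangle=-\mathbb{V}_e^t(Q_i^t)$ (degree $-1$ homogeneity of the effective variance, verified in both bandit settings via the DepRound marginal property) and the convexity of $\mathbb{V}_e^t$ in $Q_i$ --- your per-coordinate tangent-line bound $\frac{q^\star}{(q^t)^2}\ge\frac{2}{q^t}-\frac{1}{q^\star}$ is precisely the expanded, coordinate-wise form of the convexity inequality $\mathbb{V}_e^t(Q_i^t)-\mathbb{V}_e^t(Q_i^\star)\le\langle Q_i^t-Q_i^\star,\nabla_{Q_i^t}\mathbb{V}_e^t(Q_i^t)\rangle$ that the paper invokes. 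The only cosmetic difference is that you expand and lower-bound the right-hand side directly, whereas the paper multiplies the convexity inequality by $1-\eta$ and rearranges.
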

\begin{proof}
The function $\mathbb{V}_e^t(Q)$ is convex with respect to $Q$, hence for any $Q_i^t$ and $Q_i^{\star}$ we have
\begin{equation}
	\mathbb{V}_e^t(Q_i^t)-\mathbb{V}_e^t(Q_i^{\star})\leq \langle Q_i^{t}-Q_i^{\star}, \nabla_{Q_i^t}\mathbb{V}_e^t(Q_i^t)\rangle.
\end{equation}

Multiplying both sides of this inequality by $1-\eta$, we have
\begin{align}
	&(1-\eta)\mathbb{V}_e^t(Q_i^t)-(1-\eta)\mathbb{V}_e^t(Q_i^{\star}) \\
	&\leq \langle Q_i^{t}-Q_i^{\star}, \nabla_{Q_i^t}\mathbb{V}_e^t(Q_i^t)\rangle - \eta\langle Q_i^{t}-Q_i^{\star}, \nabla_{Q_i^t}\mathbb{V}_e^t(Q_i^t)\rangle.
\end{align}

In the following, we prove this Lemma in our two bandit settings: \textit{adversary MAB setting} and \textit{adversary MAB with multiple plays setting}.

In \textit{adversary MAB setting}, we have
\begin{align}
	\langle Q_i^t, \nabla_{Q_i^t}\mathbb{V}_e^t(Q_i^t)\rangle &= -\sum_{j\in\mathcal{N}_i}q_{ij}(t)\frac{\alpha_{ij}^2}{k\cdot q_{ij}(t)^2}\|h_j(t)\|^2 \\
	&= -\mathbb{V}_e^t(Q_i^t)
\end{align}

In \textit{adversary MAB with multiple plays setting}, we use the approximated effective variance $\sum_{j_s\in\mathcal{N}_i}\frac{\alpha_{ij_s}}{q_{ij_s}}\|h_{j_s}\|^2$ derived in Proposition~\ref{proposition:mp_var_bound}. For notational simplicity, we denote the approximated effective variance as $\mathbb{V}_e$ in the following. We have
\begin{align}
	\langle Q_i^t, \nabla_{Q_i^t}\mathbb{V}_e^t(Q_i^t)\rangle &= -\sum_{S_i\subset\mathcal{N}_i}Q_{i,S_i}^t\sum_{j_s\in S_i}\frac{\alpha_{ij_s}}{q_{ij_s}(t)^2}\|h_{j_s}\|^2 \label{eq:effective_variance_derivative} \\
	&= -\sum_{j_s\in\mathcal{N}_i}\frac{\alpha_{ij_s}}{q_{ij_s}(t)^2}\|h_{j_s}\|^2\sum_{S_i:j_s\in S_i}Q_{i,S_i}^t \\
	&= -\sum_{j_s\in\mathcal{N}_i}\frac{\alpha_{ij_s}}{q_{ij_s}(t)}\|h_{j_s}\|^2 \\
	&= -\mathbb{V}_e^t(Q_i^t).
\end{align}
The equation~\eqref{eq:effective_variance_derivative} holds because of Proposition~\ref{proposition:mp_derivative}.

At last, we conclude the proof
\begin{align}
	&(1-\eta)\mathbb{V}_e^t(Q_i^t)-(1-\eta)\mathbb{V}_e^t(Q_i^{\star}) \\
	&\leq \langle Q_i^{t}-Q_i^{\star}, \nabla_{Q_i^t}\mathbb{V}_e^t(Q_i^t)\rangle - \eta\langle Q_i^{t}-Q_i^{\star}, \nabla_{Q_i^t}\mathbb{V}_e^t(Q_i^t)\rangle \\
	&= \langle Q_i^{t}-Q_i^{\star}, \nabla_{Q_i^t}\mathbb{V}_e^t(Q_i^t)\rangle + \eta\langle Q_i^{\star}, \nabla_{Q_i^t}\mathbb{V}_e^t(Q_i^t)\rangle + \eta\mathbb{V}_e^t(Q_i^t).
\end{align}
\end{proof}

\begin{theorem}\label{theorem:bs}
	Using Algorithm~\ref{alg:train_gnn} with $\eta=0.4$ and $\delta=\sqrt{(1-\eta)\eta^4 k^5 \ln(n/k)/(T n^4)}$ to minimize effective variance with respect to $\{Q_i^t\}_{1\leq t\leq T}$, we have 
	\begin{equation}
		\sum_{t=1}^T\mathbb{V}_e^t(Q_i^t) \leq 3\sum_{t=1}^T\mathbb{V}_e^t(Q_i^{\star}) + 10\sqrt{\frac{Tn^4\ln(n/k)}{k^3}}
	\end{equation}
	where $T\geq \ln(n/k)n^2(1-\eta)/(k\eta^2)$ and $n=\left|\mathcal{N}_i\right|$.
\end{theorem}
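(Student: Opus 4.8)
The plan is to carry out the standard EXP3-style potential-function analysis of \cite{salehi2017stochastic}, specialized to the effective variance through Lemma~\ref{lemma:1}. Define the potential $W_t = \sum_{j\in\mathcal{N}_i} w_{ij}(t)$ with $w_{ij}(1)=1$, so $W_1=n$, and track the telescoping quantity $\ln(W_{T+1}/W_1)$. For the upper bound I would insert the multiplicative update $w_{ij}(t+1)=w_{ij}(t)\exp(\delta\hat r_{ij}(t)/n)$, substitute the EXP3 identity $w_{ij}(t)/W_t=(q_{ij}(t)-\eta/n)/(1-\eta)\le q_{ij}(t)/(1-\eta)$, and apply $e^x\le 1+x+x^2$ (valid once the hypothesis on $T$ forces $\delta\hat r_{ij}(t)/n\le 1$) followed by $\ln(1+x)\le x$. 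Summing over $t$ gives
\[
\ln\frac{W_{T+1}}{W_1}\le \frac{\delta}{n(1-\eta)}\sum_{t,j}q_{ij}(t)\hat r_{ij}(t)+\frac{\delta^2}{n^2(1-\eta)}\sum_{t,j}q_{ij}(t)\hat r_{ij}(t)^2 .
\]

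For the lower bound I would use $W_{T+1}\ge w_{ij}(T+1)$ for each single arm $j$, take logs, and bound the maximum over $j$ below by the $q_i^\star$-weighted average (since $\sum_j q_{ij}^\star=1$), yielding $\ln(W_{T+1}/W_1)\ge \frac{\delta}{n}\sum_{t,j}q_{ij}^\star\hat r_{ij}(t)-\ln(n/k)$. Chaining the two bounds, multiplying by $n/\delta$, and taking expectation is where the two structural facts of this setting enter: unbiasedness $\mathbb{E}[\hat r_{ij}(t)]=r_{ij}(t)$ together with $\mathbb{E}[q_{ij}(t)\hat r_{ij}(t)^2]=r_{ij}(t)^2$, and the Euler-type identity $\sum_j q_{ij}(t)r_{ij}(t)=\mathbb{V}_e^t(Q_i^t)$ already established inside the proof of Lemma~\ref{lemma:1}. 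This collapses the combined inequality to
\[
\sum_{t}\sum_j q_{ij}^\star r_{ij}(t)\le \frac{1}{1-\eta}\sum_t\mathbb{V}_e^t(Q_i^t)+\frac{\delta}{n(1-\eta)}\sum_{t,j}r_{ij}(t)^2+\frac{n\ln(n/k)}{\delta}.
\]

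Next I would invoke Lemma~\ref{lemma:1}, which — using the reward $r_{ij}=-\nabla_{q_{ij}}\mathbb{V}_e$ of \eqref{eq:reward1} and the identity above — reduces to $\sum_j q_{ij}^\star r_{ij}(t)\ge 2\mathbb{V}_e^t(Q_i^t)-\mathbb{V}_e^t(Q_i^\star)$. Substituting this into the left-hand side and collecting the $\mathbb{V}_e^t(Q_i^t)$ terms produces $\big(2-\tfrac{1}{1-\eta}\big)\sum_t\mathbb{V}_e^t(Q_i^t)\le \sum_t\mathbb{V}_e^t(Q_i^\star)+\text{(error)}$. With $\eta=0.4$ the leading coefficient is exactly $2-\tfrac{1}{0.6}=\tfrac13$, so multiplying through by $3$ yields the factor-$3$ approximation together with the additive error $\tfrac{3\delta}{n(1-\eta)}\sum_{t,j}r_{ij}(t)^2+\tfrac{3n\ln(n/k)}{\delta}$.

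It remains to bound the error. Using the exploration floor $q_{ij}(t)\ge\eta/n$ to control the reward magnitude lets me bound $\sum_{t,j}r_{ij}(t)^2$, and then choosing $\delta=\sqrt{(1-\eta)\eta^4k^5\ln(n/k)/(Tn^4)}$ balances the two error pieces, delivering the stated $O(\sqrt{Tn^4\ln(n/k)/k^3})$ term with constant $10$; the hypothesis $T\ge \ln(n/k)n^2(1-\eta)/(k\eta^2)$ is precisely what keeps $\delta\hat r/n\le 1$ (so the exponential inequality is legitimate) and keeps $q_i^t$ a valid distribution. The hardest part is exactly this last step: matching the second-order term $\sum_{t,j}r_{ij}(t)^2$ against $n\ln(n/k)/\delta$ under the prescribed $\delta$ to recover both the precise rate and the explicit constant $10$ requires careful tracking of the normalization of $\alpha_{ij}\|h_j\|$ and the $1/k$ factors. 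The EXP3.M (multiple-plays) branch is more delicate still: it reuses the DepRound marginal identity $\sum_{S_i:j\in S_i}Q_{i,S_i}=q_{ij}$ from Propositions~\ref{proposition:estimator2}–\ref{proposition:mp_derivative} and must accommodate the capped-weight threshold $a_t$ inside the potential argument, but otherwise follows the same template.
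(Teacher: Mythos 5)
Your proposal is correct and follows essentially the same route as the paper's proof: the EXP3/EXP3.M potential-function argument of Salehi et al., with the upper bound from $e^x\le 1+x+x^2$ and the substitution $w_{ij}(t)/W_i(t)\propto (q_{ij}(t)-\eta\text{-floor})/(1-\eta)$, the lower bound via the $q_i^\star$-weighted log-weights, the expectation step using $\mathbb{E}[\hat r_{ij}^2(t)]=r_{ij}^2(t)/q_{ij}(t)$, the homogeneity identity $\langle Q_i^t,\nabla\mathbb{V}_e^t\rangle=-\mathbb{V}_e^t(Q_i^t)$ feeding Lemma~\ref{lemma:1}, and the same $\eta=0.4$ bookkeeping (your coefficient $2-\tfrac{1}{1-\eta}=\tfrac13$ is exactly the paper's $(1-2\eta)/(1-\eta)$ rearranged). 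The only cosmetic divergence is that the paper writes the argument in the EXP3.M normalization throughout (exploration floor $q_{ij}(t)\ge k\eta/n$, $k$-subset AM-GM giving the $-\ln(n/k)$ term, and the $U_i^t$ capped-arm bookkeeping you correctly flag), which is where the paper's particular powers of $k$ and the constant $10$ come from.
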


\begin{proof}
First we explain why condition $T\geq \ln(n/k)n^2(1-\eta)/(k\eta^2)$ ensures that $\delta\hat{r}_{ij}(t)\leq 1$,
{\footnotesize
\begin{align}
	\delta\hat{r}_{ij}(t) &= \sqrt{\frac{(1-\eta)\eta^4 k^5 \ln(n/k)}{T n^4}}\cdot\frac{\alpha_{ij}(t)}{q_{ij}^3(t)}\|h_{j}(t)\|^2 \\
	&\leq \sqrt{\frac{(1-\eta)\eta^4k^5\ln(n/k)}{Tn^4}}\cdot\frac{n^3}{k^3\eta^3} \label{eq:apx1} \\
	&\leq 1
\end{align}
}
Assuming $\|h_{j}(t)\|\leq 1$, inequality~\eqref{eq:apx1} holds because $\alpha_{ij}(t)\leq 1$ and $q_{ij}(t)\geq k\eta/n$. Then replace $T$ by the condition, we get $\delta\hat{r}_{ij}(t)\leq 1$.

Let $W_i(t)$, $W_i^{\prime}(t)$ denote $\sum_{j\in\mathcal{N}_i}w_{ij}(t)$, $\sum_{j\in\mathcal{N}_i}w_{ij}^{\prime}(t)$ respectively. Then for any $t=1,2,...,T$,
{\footnotesize
\begin{align}
\frac{W_i(t+1)}{W_i(t)} &= \sum_{j\in\mathcal{N}_i\backslash U_i^t}\frac{w_{ij}(t+1)}{W_{i}(t)} + \sum_{j\in U_i^t}\frac{w_{ij}(t+1)}{W_{i}(t)} \\ 
&= \sum_{j\in\mathcal{N}_i\backslash U_i^t}\frac{w_{ij}(t)}{W_{i}(t)}\cdot\exp(\delta\hat{r}_{ij}(t)) + \sum_{j\in U_i^t}\frac{w_{ij}(t)}{W_{i}(t)} \\
&\leq \sum_{j\in\mathcal{N}_i\backslash U_i^t}\frac{w_{ij}(t)}{W_{i}(t)}\left[1 + \delta\hat{r}_{ij}(t) + \left(\delta\hat{r}_{ij}(t)\right)^2\right] + \sum_{j\in U_i^t}\frac{w_{ij}(t)}{W_{i}(t)} \label{eq:apx2} \\
&= 1 + \frac{W_i^{\prime}(t)}{W_i(t)}\sum_{j\in\mathcal{N}_i\backslash U_i^t}\frac{w_{ij}(t)}{W_i^{\prime}(t)}\left[\delta\hat{r}_{ij}(t) +  \left(\delta\hat{r}_{ij}(t)\right)^2\right] \\
&= 1 + \frac{W_i^{\prime}(t)}{W_i(t)}\sum_{j\in\mathcal{N}_i\backslash U_i^t}\frac{q_{ij}(t)/k - \eta/n}{1-\eta}\left[\delta\hat{r}_{ij}(t) +  \left(\delta\hat{r}_{ij}(t)\right)^2\right] \label{eq:apx14} \\
&\leq 1 + \frac{\delta}{k(1-\eta)}\sum_{j\in\mathcal{N}_i\backslash U_i^t}q_{ij}(t)\hat{r}_{ij}(t) + \frac{\delta^2}{k(1-\eta)}\sum_{j\in\mathcal{N}_i\backslash U_i^t}q_{ij}(t)\hat{r}_{ij}^2(t) \label{eq:apx3}
\end{align}
}

Inequality~\eqref{eq:apx2} uses $e^a\leq 1+a+a^2$ for $a\leq 1$. Equality~\eqref{eq:apx14} holds because of update equation of $q_{ij}(t)$ defined in EXP3.M. Inequality~\eqref{eq:apx3} holds because $\frac{W_i^{\prime}(t)}{W_i(t)}\leq 1$. Since $1+x\leq e^x$ for $x\geq 0$, we have
{\footnotesize
\begin{equation}
	\ln\frac{W_i(t+1)}{W_i(t)} \leq \frac{\delta}{k(1-\eta)}\sum_{j\in\mathcal{N}_i\backslash U_i^t}q_{ij}(t)\hat{r}_{ij}(t) + \frac{\delta^2}{k(1-\eta)}\sum_{j\in\mathcal{N}_i\backslash U_i^t}q_{ij}(t)\hat{r}_{ij}^2(t)
\end{equation}
}

If we sum, for $1\leq t\leq T$, we get the following telescopic sum
{\footnotesize
\begin{align}
	\ln\frac{W_i(T+1)}{W_i(1)} &= \sum_{t=1}^{T}\ln\frac{W_i(t+1)}{W_i(t)} \\
	&\leq \frac{\delta}{k(1-\eta)}\sum_{t=1}^T\sum_{j\in\mathcal{N}_i\backslash U_i^t}q_{ij}(t)\hat{r}_{ij}(t) + \frac{\delta^2}{k(1-\eta)}\sum_{t=1}^T\sum_{j\in\mathcal{N}_i\backslash U_i^t}q_{ij}(t)\hat{r}_{ij}^2(t) \\
	&\leq \frac{\delta}{k(1-\eta)}\sum_{t=1}^T\sum_{j\in\mathcal{N}_i\backslash U_i^t}q_{ij}(t)\hat{r}_{ij}(t) + \frac{\delta^2}{k(1-\eta)}\sum_{t=1}^T\sum_{j\in\mathcal{N}_i}q_{ij}(t)\hat{r}_{ij}^2(t) \label{eq:apx5}
\end{align}
}

On the other hand, for any subset $S$ containing k elements,
{\footnotesize
\begin{align}
	\ln\frac{W_i(T+1)}{W_i(1)}&\geq \ln\frac{\sum_{j\in S}w_{ij}(T+1)}{W_i(1)} \\
	&\geq \frac{\sum_{j\in S}\ln w_{ij}(T+1)}{k} - \ln\frac{n}{k} \label{eq:apx6} \\
	&\geq \frac{\delta}{k}\sum_{j\in S}\sum_{t:j\notin U_i^t}\hat{r}_{ij}(t) - \ln\frac{n}{k} \label{eq:apx7}
\end{align}
}
The inequality~\eqref{eq:apx6} uses the fact that
{\footnotesize
\begin{equation*}
	\sum_{j\in S}w_{ij}(T+1)\geq k(\prod_{j\in S}w_{ij}(T+1))^{1/k}
\end{equation*}
}

The equation~\eqref{eq:apx7} uses the fact that
{\footnotesize
\begin{equation*}
	w_{ij}(T+1)=\exp(\delta\sum_{t:j\notin U_i^t}\hat{r}_{ij}(t))
\end{equation*}
}

From~\eqref{eq:apx5} and ~\eqref{eq:apx7}, we get
{\footnotesize
\begin{equation}
	\frac{\delta}{k}\sum_{j\in S}\sum_{t:j\notin U_i^t}\hat{r}_{ij}(t) - \ln\frac{n}{k} \leq \frac{\delta}{k(1-\eta)}\sum_{t=1}^T\sum_{j\in\mathcal{N}_i\backslash U_i^t}q_{ij}(t)\hat{r}_{ij}(t) + \frac{\delta^2}{k(1-\eta)}\sum_{t=1}^T\sum_{j\in\mathcal{N}_i\backslash U_i^t}q_{ij}(t)\hat{r}_{ij}^2(t) \label{eq:apx13}
\end{equation}
}

And we have the following inequality
{\footnotesize
\begin{align}
	\frac{\delta}{k}\sum_{j\in S}\sum_{t:j\in U_i^t}r_{ij}(t) &= \frac{\delta}{k}\sum_{j\in S}\sum_{t:j\in U_i^t}q_{ij}(t)\hat{r}_{ij}(t) \label{eq:apx11} \\
		&\leq \frac{\delta}{k(1-\eta)}\sum_{t=1}^T\sum_{j\in U_i^t}q_{ij}(t)\hat{r}_{ij}(t) \label{eq:apx12}
\end{align}
}
The equality~\eqref{eq:apx11} holds beacuse $r_{ij}(t)=q_{ij}\hat{r}_{ij}(t)$ when $j\in S_i^t$ and $U_i^t\subseteq S_i^t$ bacause $q_{ij}^t=1$ for all $j\in U_i^t$.

Then add inequality~\eqref{eq:apx12} in ~\eqref{eq:apx13} we have
{\footnotesize
\begin{align}\label{eq:apx8}
&	\frac{\delta}{k}\sum_{j\in S}\sum_{t:j\in U_i^t}r_{ij}(t) + \frac{\delta}{k}\sum_{j\in S}\sum_{t:j\notin U_i^t}\hat{r}_{ij}(t) - \ln\frac{n}{k} \\
& \leq \frac{\delta}{k(1-\eta)}\sum_{t=1}^T\sum_{j\in\mathcal{N}_i}q_{ij}(t)\hat{r}_{ij}(t) + \frac{\delta^2}{k(1-\eta)}\sum_{t=1}^T\sum_{j\in\mathcal{N}_i}q_{ij}(t)\hat{r}_{ij}^2(t) 
\end{align}
}
Given $q_{ij}(t)$ we have $\mathbb{E}[\hat{r}_{ij}^2(t)]=r_{ij}^2(t)/q_{ij}(t)$, hence, taking expectation of~\eqref{eq:apx8} yields that
{\footnotesize
\begin{equation}
	\frac{\delta}{k}\sum_{t=1}^T\sum_{j\in S}r_{ij}(t) - \ln\frac{n}{k}\leq \frac{\delta}{k(1-\eta)}\sum_{t=1}^T\sum_{j\in\mathcal{N}_i}q_{ij}(t) r_{ij}(t) + \frac{\delta^2}{k(1-\eta)}\sum_{t=1}^T\sum_{j\in\mathcal{N}_i}r_{ij}^2(t) \label{eq:apx9}
\end{equation}
}
By multiplying~\eqref{eq:apx9} by $Q_{i,S}^{\star}$ and summing over $S$, we get
{\footnotesize
\begin{equation}
	\frac{\delta}{k}\sum_{t=1}^T\sum_{S\subset\mathcal{N}_i} Q_{i,S}^{\star}\sum_{j\in S}r_{ij}(t) - \ln\frac{n}{k}\leq \frac{\delta}{k(1-\eta)}\sum_{t=1}^T\sum_{j\in\mathcal{N}_i}q_{ij}(t) r_{ij}(t) + \frac{\delta^2}{k(1-\eta)}\sum_{t=1}^T\sum_{j\in\mathcal{N}_i}r_{ij}^2(t) \label{eq:apx10}
\end{equation}
}
As
{\footnotesize
\begin{align}
	\sum_{j\in\mathcal{N}_i}q_{ij}(t) r_{ij}(t)&=\sum_{j\in\mathcal{N}_i}\sum_{S_i:j\in S_i}Q_{i,S_i}^t r_{ij}(t) \\
	&= \sum_{S_i\subset\mathcal{N}_i}Q_{i,S_i}^t\sum_{j\in S_i}r_{ij}(t) \\
	&= -\sum_{S_i\subset\mathcal{N}_i}Q_{i,S_i}^t\nabla_{Q_{i,S_i}^t}\mathbb{V}_e^t(Q_{i,S_i}^t) \\
	&= -\langle Q_{i}^t,\nabla_{Q_i^t}\mathbb{V}_e^t(Q_{i}^t)\rangle \label{eq:apx15}
\end{align}
}

By plugging \eqref{eq:apx15} in~\eqref{eq:apx10} and rearranging it, we find
{\footnotesize
\begin{align}
	&\sum_{t=1}^T\langle Q_{i}^t-Q_{i}^{\star}, \nabla_{Q_i^t}\mathbb{V}_e^t(Q_{i}^t)\rangle + \eta\sum_{t=1}^T\langle Q_i^{\star}, \nabla_{Q_i^t}\mathbb{V}_e^t(Q_i^{t})\rangle \\\nonumber
& \leq \delta\sum_{t=1}^T\sum_{j\in\mathcal{N}_i}r_{ij}^2(t) + \frac{(1-\eta)k}{\delta}\ln (n/k)
\end{align}
}
Using Lemma~\ref{lemma:1}, we have
{\footnotesize
\begin{equation}
	(1-2\eta)\sum_{t=1}^T\mathbb{V}_e^t(Q_i^t)-(1-\eta)\sum_{t=1}^T\mathbb{V}_e^t(Q_i^{\star})\leq \delta\sum_{t=1}^T\sum_{j\in\mathcal{N}_i}r_{ij}^2(t) + \frac{(1-\eta)k}{\delta}\ln (n/k)
\end{equation}
}
Finally, we know that 
{\footnotesize
\begin{align}
	\sum_{j\in \mathcal{N}_i}r_{ij}^2(t)&=\sum_{j\in\mathcal{N}_i}\frac{\alpha_{ij}(t)^2}{q_{ij}(t)^4} \\
	&\leq \sum_{j\in\mathcal{N}_i}\alpha_{ij}(t)\frac{n^4}{k^4\eta^4} \; (because\, q_{ij}(t)\geq k\eta/n) \\
	&= \frac{n^4}{k^4\eta^4}
\end{align}
}
By setting $\eta=0.4$ and $\delta=\sqrt{(1-\eta)\eta^4 k^5 \ln(n/k)/(T n^4)}$, we get the upper bound.

\end{proof}

\section{Experiments}

\subsection{Convergences}\label{appendix:convergences}
We show the convergences on validation in terms of timing (seconds)
in Figure~\ref{fig:timing_gcn} and Figure~\ref{fig:timing_gat}. Basically, our algorithms converge
to much better results in nearly same duration compared with other
``layer sampling'' approaches.

Note that we cannot complete the training of AS-GAT on Reddit
because of memory issues.

\begin{figure*}[h]
\includegraphics[width=0.33\textwidth,height=0.33\textwidth]{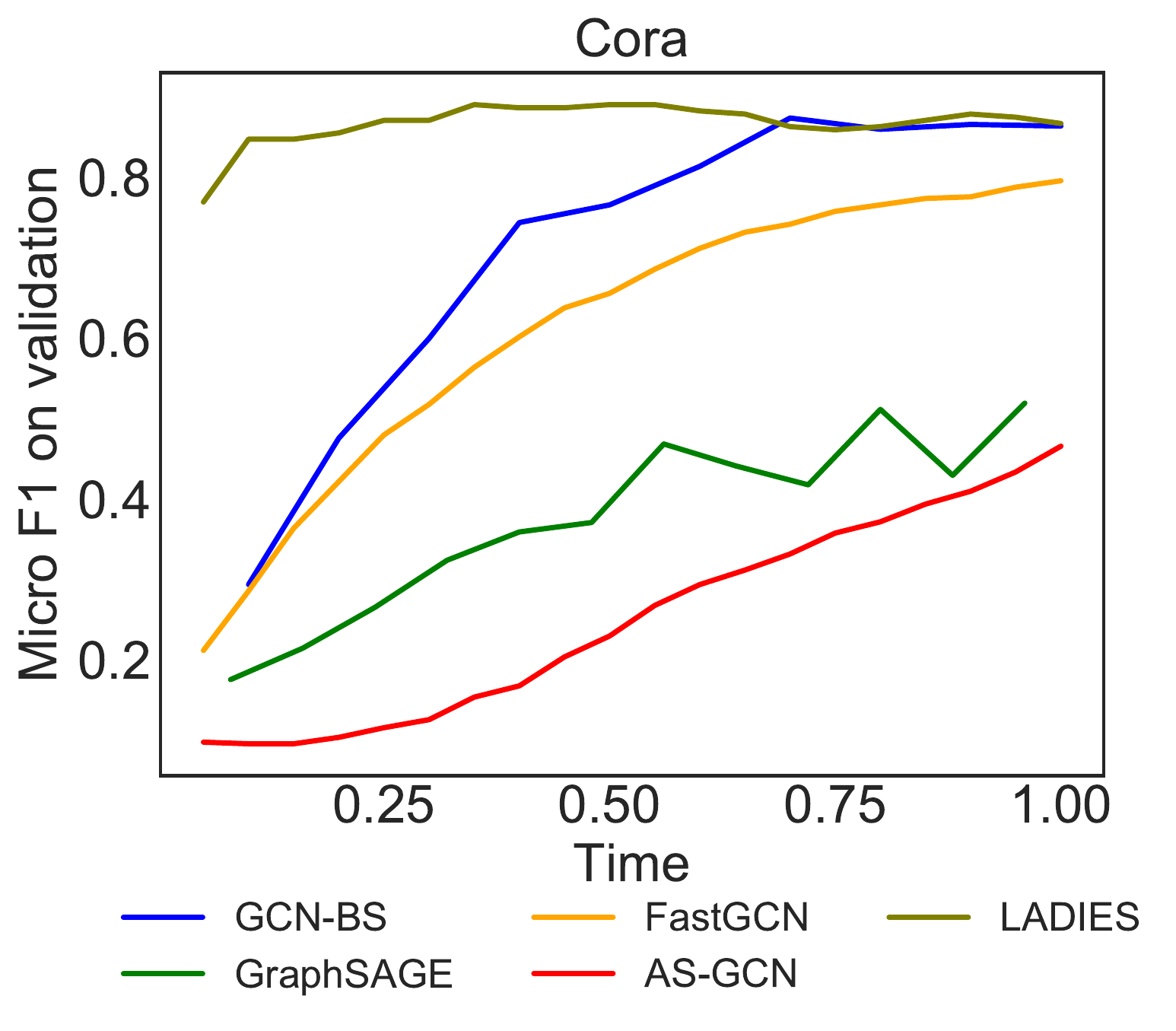}
\includegraphics[width=0.33\textwidth,height=0.33\textwidth]{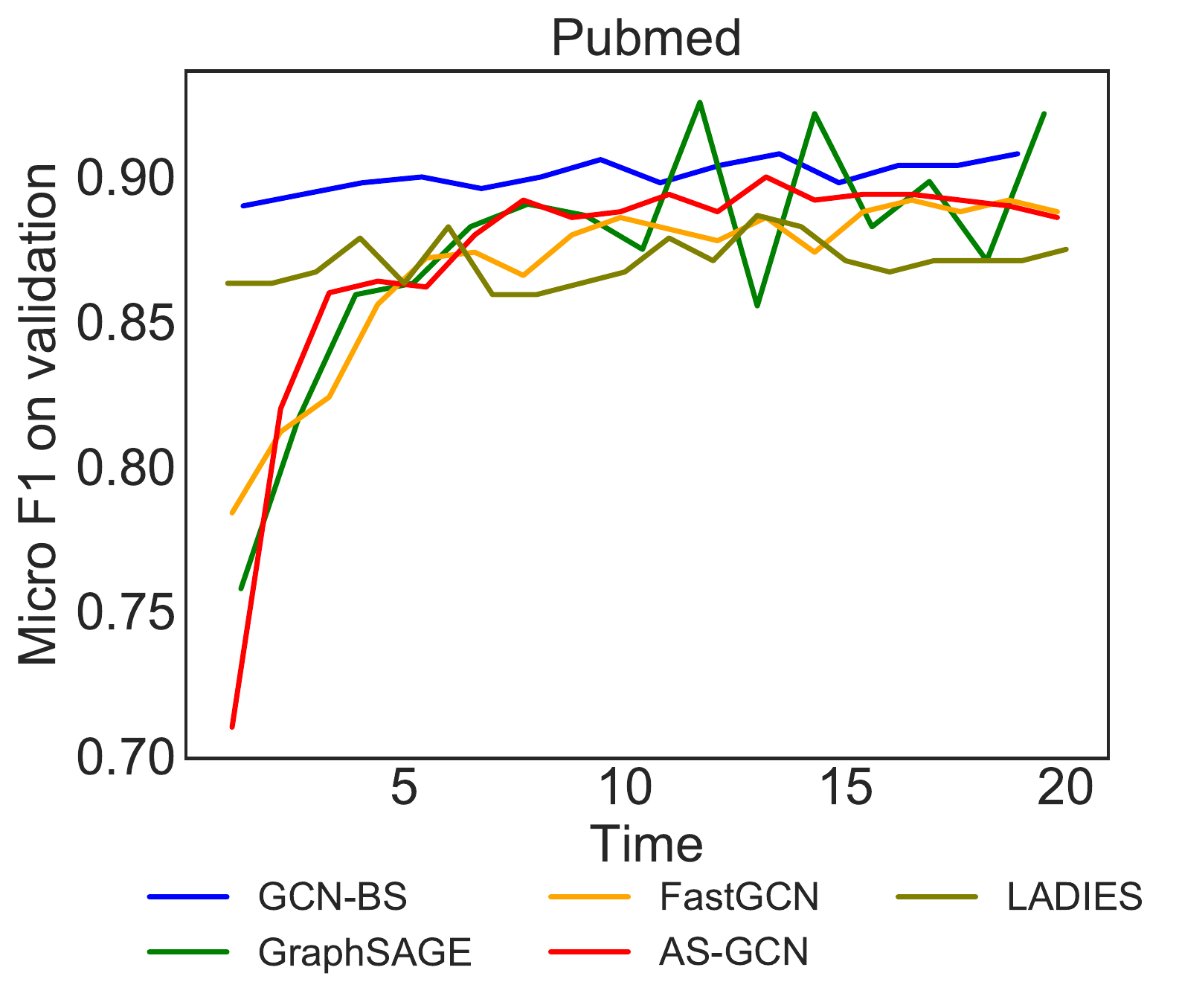}
\includegraphics[width=0.33\textwidth,height=0.33\textwidth]{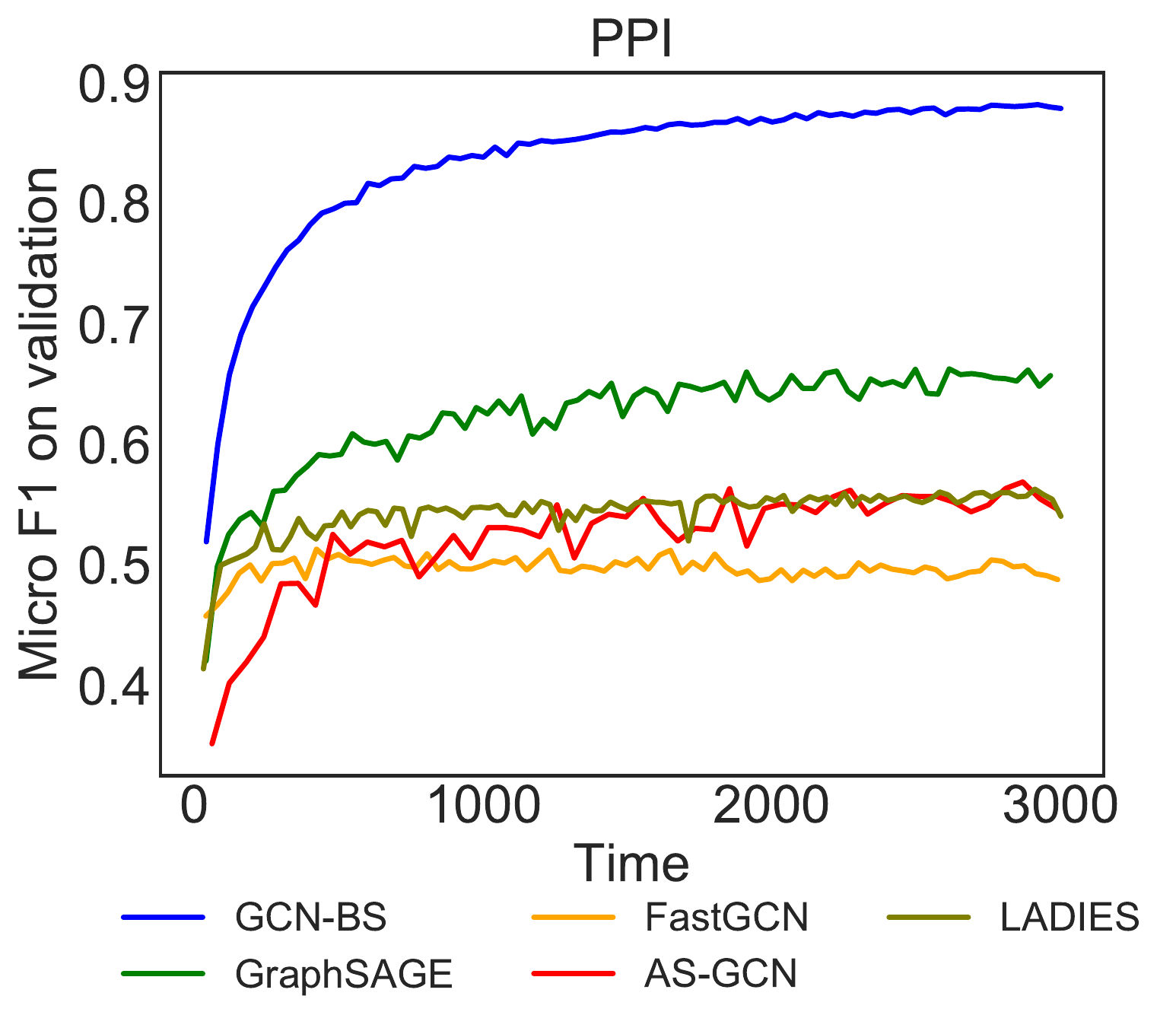}
\includegraphics[width=0.33\textwidth,height=0.33\textwidth]{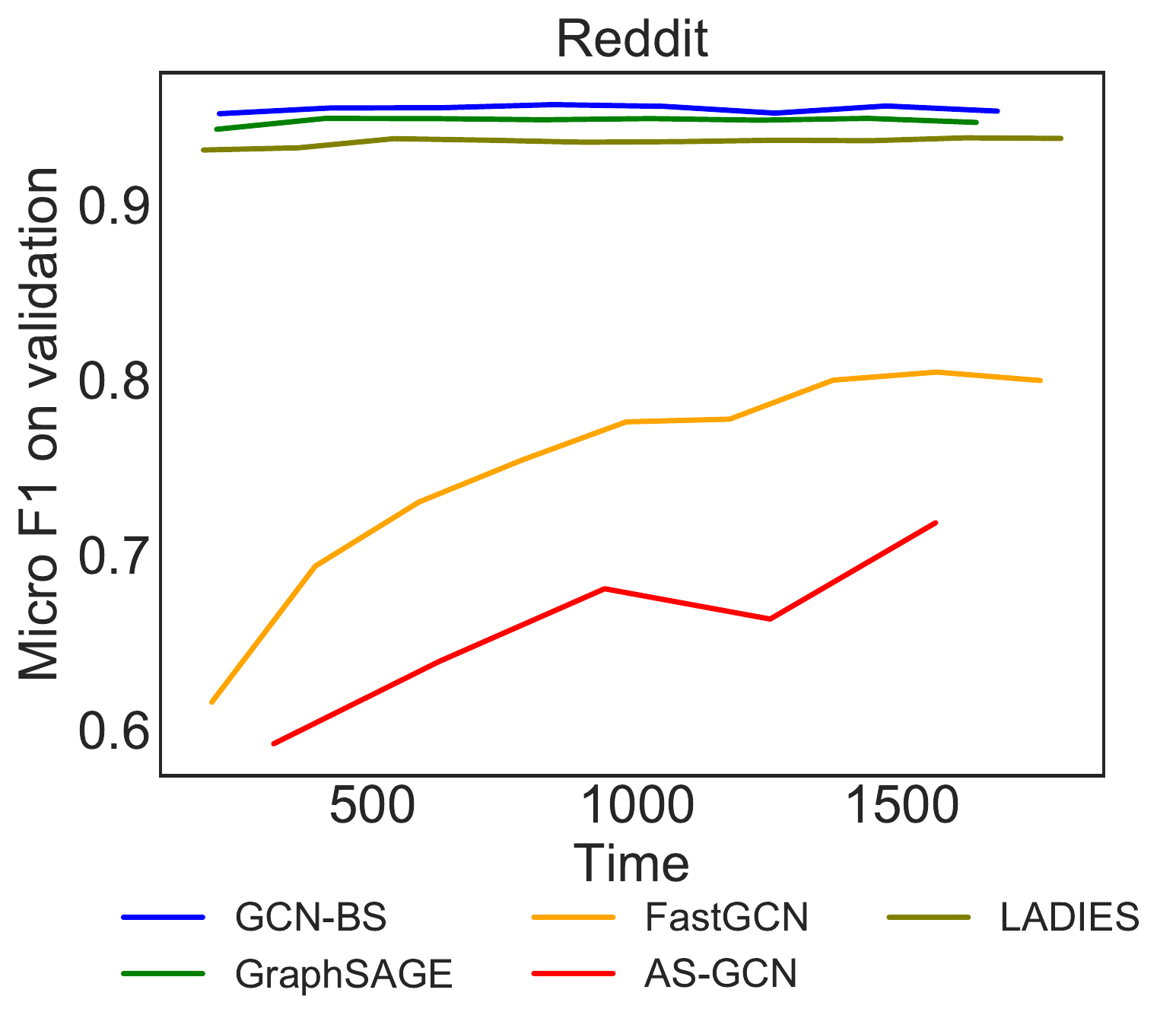}
\includegraphics[width=0.33\textwidth,height=0.33\textwidth]{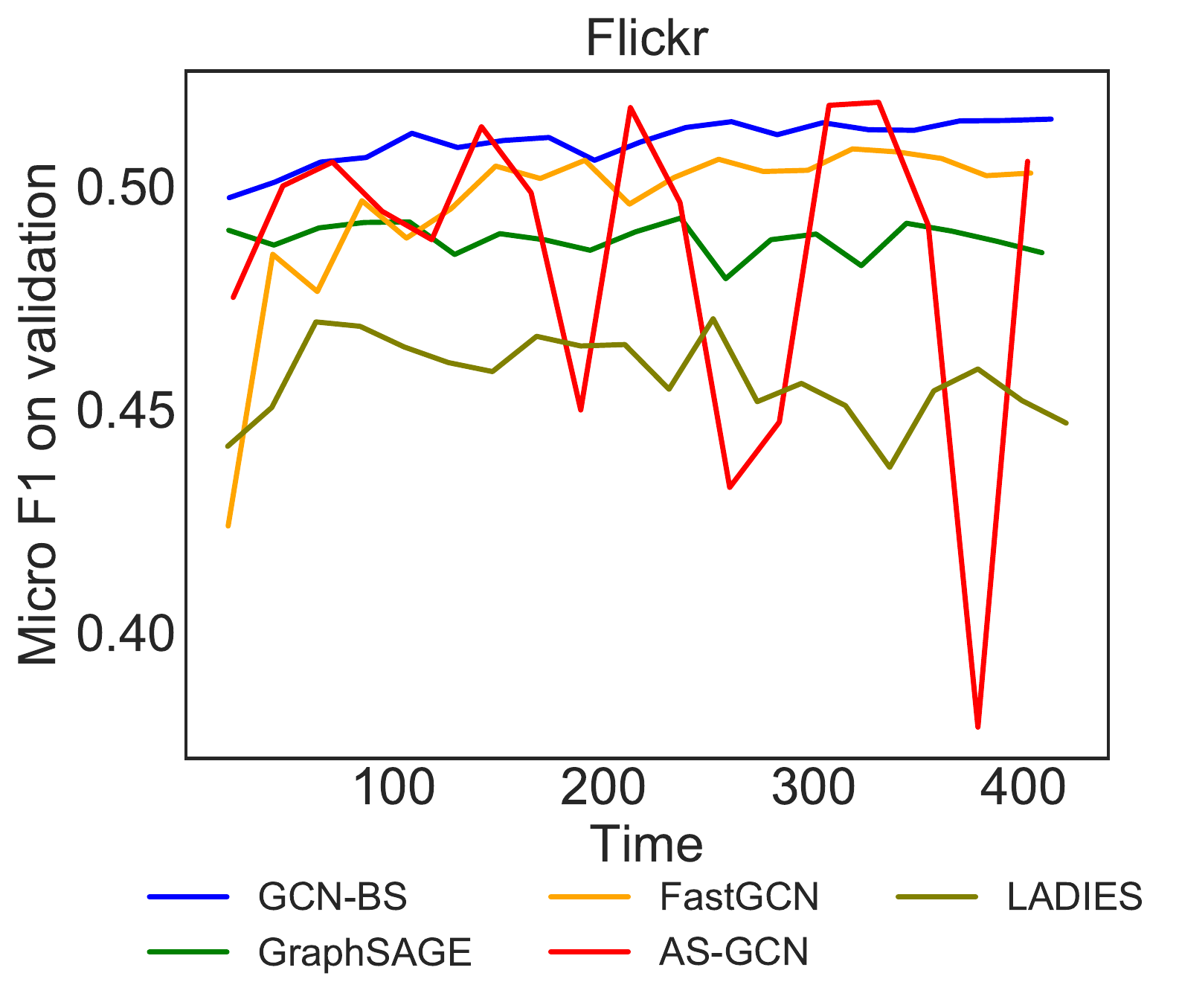}
\caption{The convergence in timing (seconds) on GCNs.}
\label{fig:timing_gcn}
\end{figure*}

\begin{figure*}[h]
\includegraphics[width=0.24\textwidth,height=0.24\textwidth]{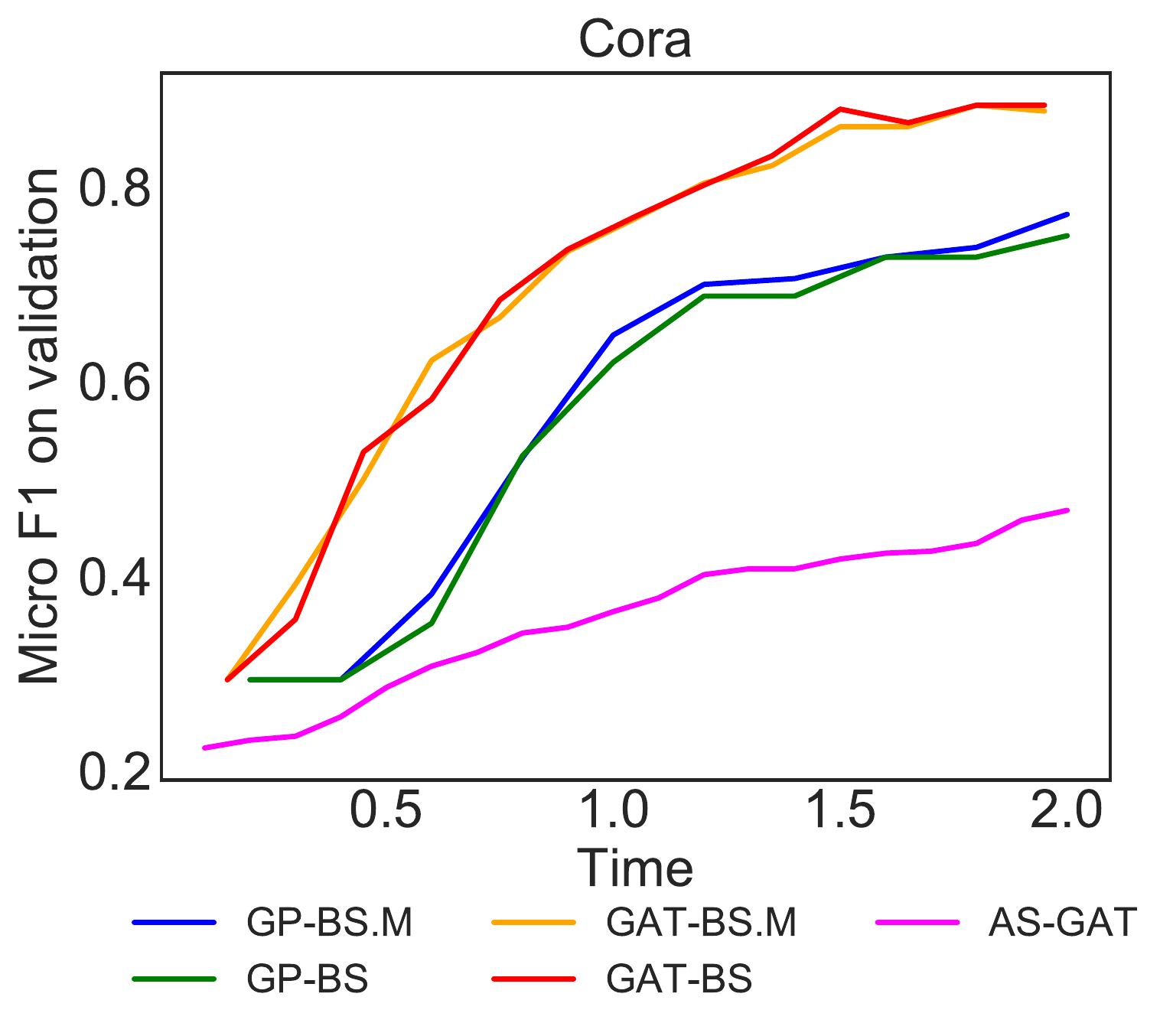}
\includegraphics[width=0.24\textwidth,height=0.24\textwidth]{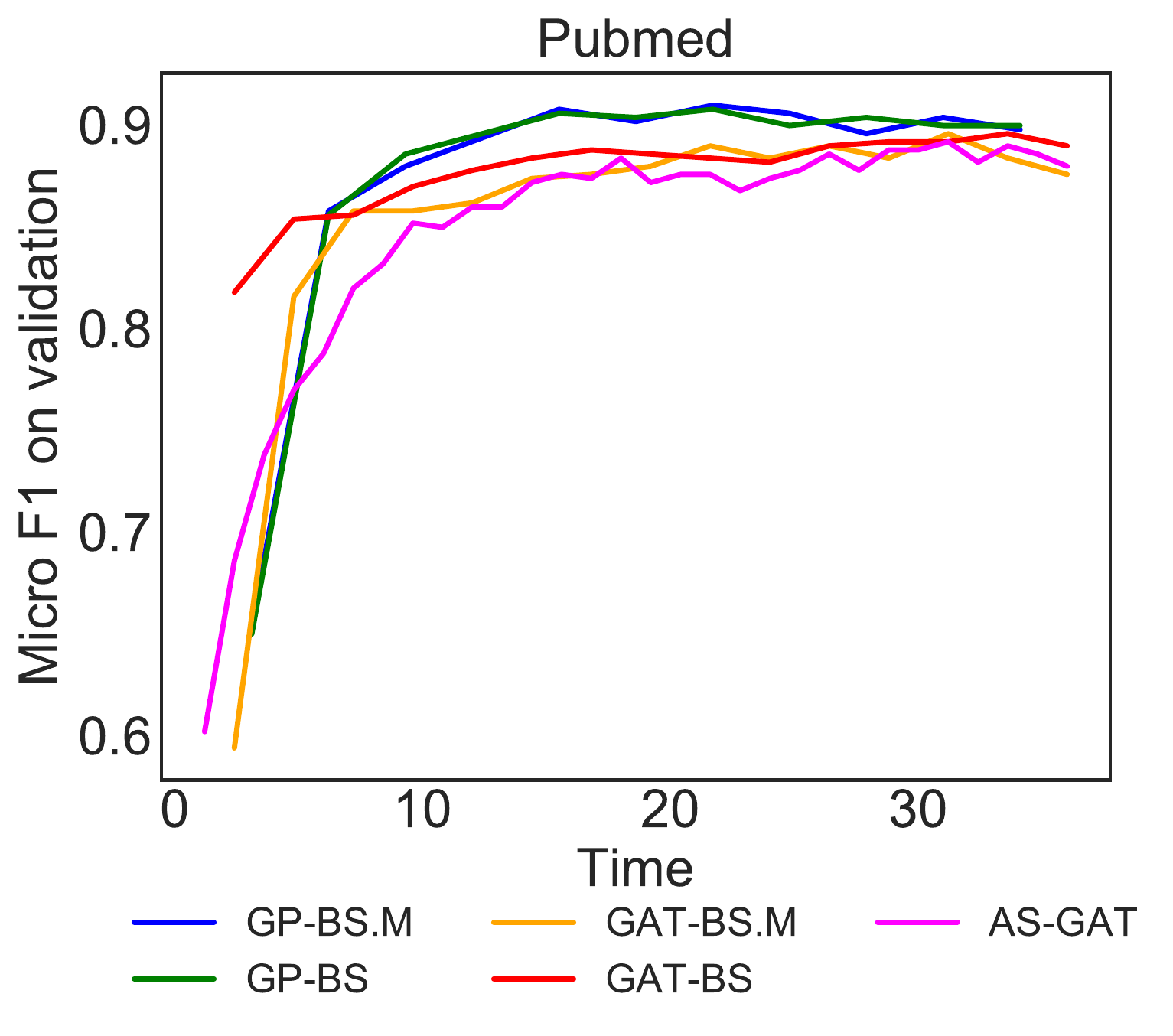}
\includegraphics[width=0.24\textwidth,height=0.24\textwidth]{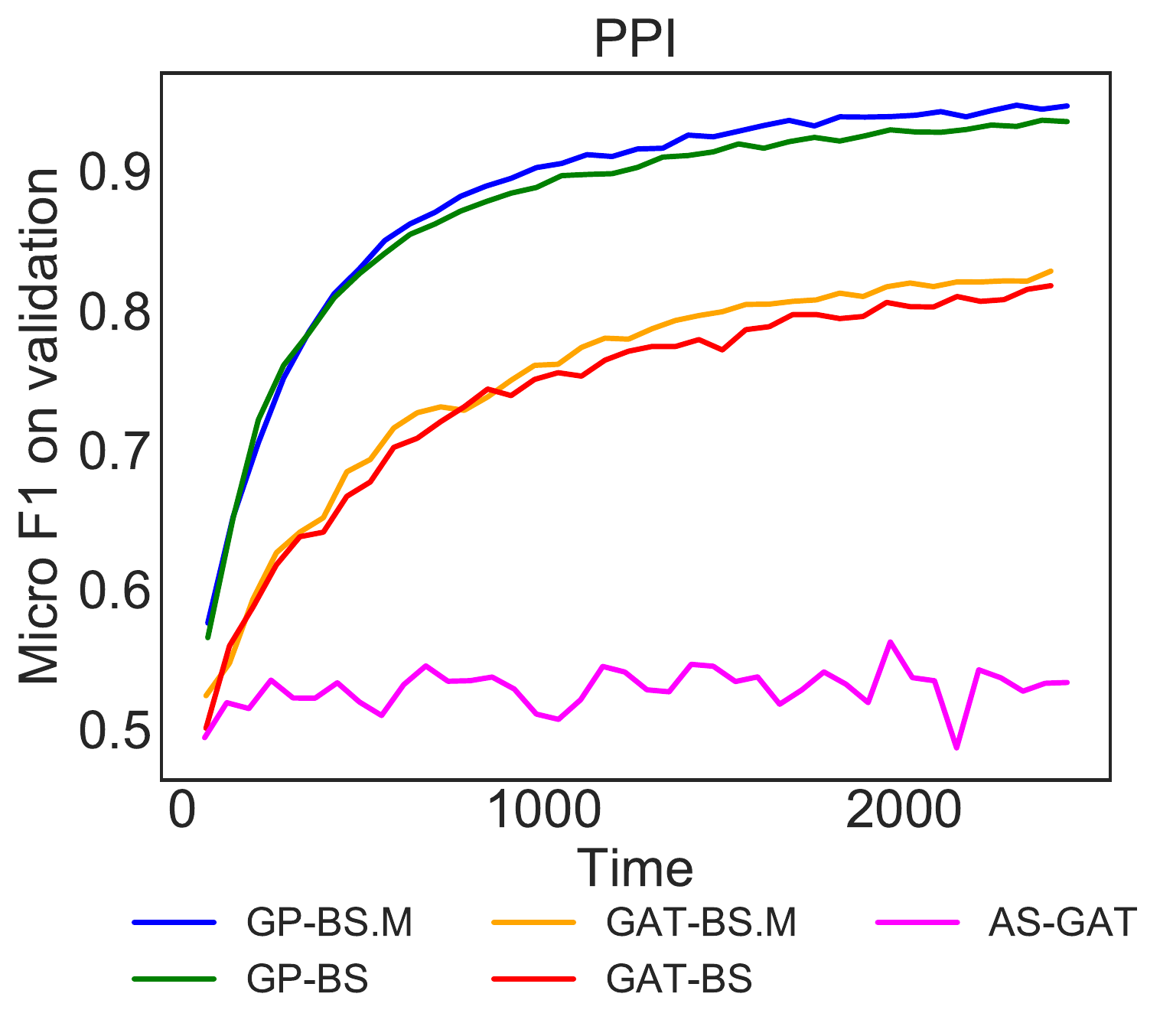}
\includegraphics[width=0.24\textwidth,height=0.24\textwidth]{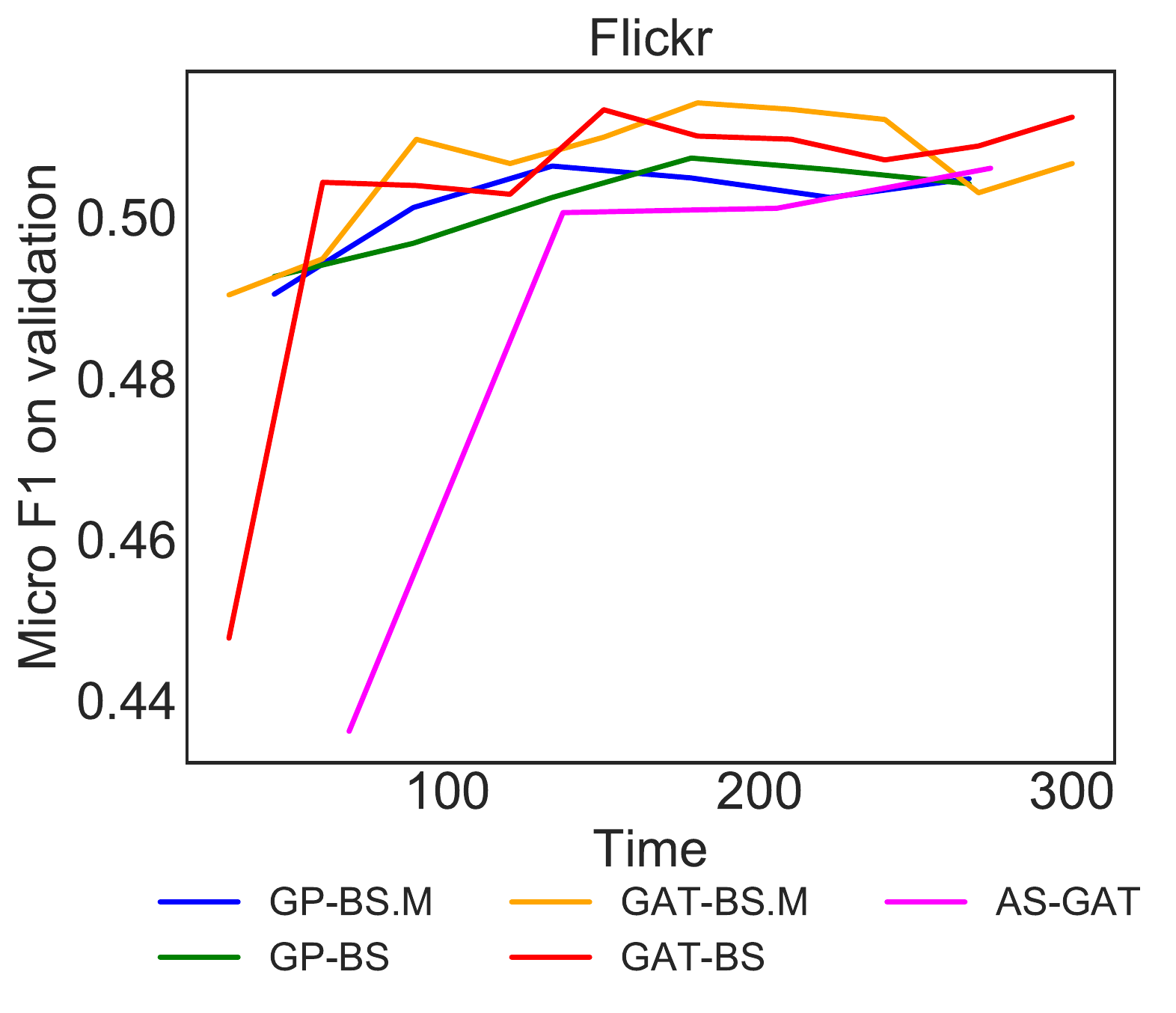}
\caption{The convergence in timing (seconds) on attentive GNNs.}
\label{fig:timing_gat}
\end{figure*}
\subsection{Discussions on Timings between Layer Sampling and Graph Sampling Paradigms}\label{appendix:layer_vs_graph}

Note that the comparisons of 
timing between ``graph sampling'' and ``layer sampling'' 
paradigms have been studied recently in~\cite{chiang2019cluster,zeng2019graphsaint}.
As a result, we do not compare the timing with ``graph sampling'' approaches.
Under certain conditions, the graph sampling approaches 
should be faster than layer sampling approaches. That is, graph sampling approaches
are designed for graph data that all vertices have labels.
Under such condition, the floating point operations analyzed in~\cite{chiang2019cluster}
are maximally utilized compared with the ``layer sampling'' paradigm.
However, in practice, there are large amount of graph data with labels
only on some types of vertices, such as the graphs in~\cite{liu2018heterogeneous}.
``Graph sampling'' approaches
are not applicable to cases where only partial vertices have labels.
To summarize, the
``layer sampling'' approaches are more flexible
and general compared with ``graph sampling''
approaches in many cases.

\subsection{Results on OGB}
We report our results on 
OGB protein dataset~\cite{hu2020open}. We set the learning rate as 1e-3, batch size as 256, the dimension of hidden embeddings as 64, sample size as 10 and epochs as 200. We save the model based on the best results on validation and report results on testing data. We run the experiment 10 times with different random seeds to compute the average and standard deviation of the results. Our result of GP-BS on protein dataset performs the best\footnote{Please refer to \url{https://ogb.stanford.edu/docs/leader_nodeprop/}.}
until we submitted this paper. Please find our implementations at \url{https://github.com/xavierzw/ogb-geniepath-bs}.

\setlength{\tabcolsep}{1pt}
\begin{table}[H]
  \centering
  \begin{tabular}{cccc}
  	\toprule
    Dateset & Mean & Std & \#experiments \\
    \midrule
    ogbn-proteins&  $0.78253$ & $0.00352$ & $10$ \\
  \bottomrule
\end{tabular}
\end{table}

\end{document}